\documentclass{article}
\usepackage{iclr2027_conference,times}
\iclrfinalcopy
\newcommand{\coderepo}{https://github.com/urbanke/learning-source-structure}

\usepackage{amsmath,amssymb,amsthm}
\usepackage{graphicx}
\usepackage[section]{placeins}
\usepackage{needspace}
\usepackage{booktabs}
\usepackage{xcolor}
\usepackage{microtype}
\usepackage{xurl}
\usepackage[colorlinks=true,linkcolor=blue,citecolor=blue,urlcolor=blue]{hyperref}

\usepackage{xspace}

\newtheorem{lemma}{Lemma}

\theoremstyle{definition}

\theoremstyle{remark}

\title{Learning Source Structure:\\ Bayesian Prediction with Transformers}

\author{Meir Feder \\
School of Electrical and Computer Engineering \\
Tel Aviv University, Tel Aviv, Israel \\
\texttt{meir@tau.ac.il}
\AND
Ruediger Urbanke \\
School of Computer and Communication Sciences \\
EPFL, Lausanne, Switzerland \\
\texttt{rudiger.urbanke@epfl.ch}
}

\begin{document}
\maketitle
\fancyhead[L]{Preprint}

\begin{abstract}
A transformer trained on sequences whose source parameters are drawn afresh each time can learn only what the sequences share: the source family and its prior. The Bayesian predictor that knows both, which we call informed Bayes, is the unique minimizer of the training loss and is computable from the family alone; we use it as a quantitative, falsifiable reference. Where earlier work compared transformers with Bayes on sources whose contexts can be learned independently, we build Markov sources with up to 64 symbols in which observations in one context inform prediction in another: product sources with known or hidden encodings, transition rows sharing an unknown base law and depth, and a mixture of three families. Informed Bayes is evaluated numerically from the row integrals of the layered simplex architecture. Our main result is that informed Bayes predicts, from the family and prior alone, how a trained transformer behaves in four tests that go beyond the learning curve: the loss that remains under a source outside the training family (predicted 2.237 bits, measured 2.309), the doubling of the extra loss when two independent components change rather than one (0.345 predicted, 0.350 measured), the benefit of other contexts when a token appears for the first time, and the depth at which copying an earlier continuation starts to pay. On the learning curves themselves, small transformers match informed Bayes when the shared structure is visible in the tokens, and hidden structure needs more layers and training, with gaps below 0.01 bits per token for the largest models. We explain why the imitation is possible and where it must fail: informed Bayes decomposes into sufficient statistics that attention selects and accumulates and an estimator, fixed by the prior, that the feed-forward network learns; in the architecture-as-prior framework, training pays the complexity of informed Bayes under the architecture's prior, and finite-memory universal prediction bounds how closely a bounded network can reach it.
\end{abstract}

\section{Introduction}
\label{sec:intro}

A transformer that is trained on many sequences can learn what their sources
have in common. Suppose every sequence comes from a source with freshly drawn
parameters. Then training cannot reveal the parameters of the next sequence.
It can reveal the family of sources, and how the parameters within one source
are related: each token may be a pair whose two parts follow independent
chains, or all contexts may share a preference for certain tokens. A predictor that knows such a relation can pool observations across contexts. How well does a transformer learn to use this shared structure?

There is a natural yardstick. The \emph{informed Bayesian predictor},
informed Bayes for short, is told the source family and the prior. It is not
told the parameters of the sequence at hand; after each token it updates its posterior
belief about them and averages the next-token prediction over this belief.
When sequences are drawn from the family and the prior, no predictor that
sees the same tokens has a smaller expected log loss. The transformer is
trained to minimize this same loss and has the memory to approximate the informed Bayes. We ask two questions. How close does a
transformer, which must learn the family from examples, get to
informed Bayes, even in shared structure? And can this correspondence predict the transformer's
behavior from what we can infer for informed Bayes?

Earlier work points out the following. The prediction over a new test sequence (prompt) was called in-context learning and has been explained as
inference about a hidden source \citep{xie2022explanation}. Networks trained
on samples from a prior learn to predict like the posterior
\citep{muller2022transformers}. Regression experiments find Bayesian behavior
within and across task families \citep{panwar2024bayesian}.
Information-theoretic work separates what is learned across tasks from what
is inferred within a new task \citep{jeon2024information}. Meta-trained transformers and LSTMs were shown to reach the universal predictor of their training class \citep{graumoya2024universal}. For sequential
sources, the closest studies use Markov chains with independently drawn
transition rows \citep{edelman2024evolution,bondaschi2025markovlaplace},
smoothing across context orders \citep{dangelo2026induction}, variable-order
context trees \citep{zhou2024vomc}, and hidden causal structures
\citep{wei2026causal}. None of this guarantees that a trained network reaches the Bayesian optimum; we measure in bits how close it gets when contexts are tied together (Appendix~\ref{app:related} compares these papers in detail).

In the experiments reported in this paper each of the sources tests one kind of shared structure. Coins test whether
the prior on one unknown bias is learned. In product sources every token is
a pair of independent coordinates. The table that maps pairs to token
labels, the \emph{encoding}, is either known or hidden. In the shared-base
source all transition rows are built from one unknown base law and one
unknown depth, so tokens seen in one context help in every other context.
The uniform-base source, with a known uniform base law, is a control, and
in pooled training the family itself is unknown. The Bayesian predictors are
built from the row integrals of the layered simplex architecture (LSA)
\citep{feder2026lsa}.

Our main finding is that informed Bayes predicts, in bits, how a trained transformer responds in four tests that deviate from the training conditions (Section~\ref{sec:context-application}; Table~\ref{tab:pred}). In each test we first work out what informed Bayes does, from the family and prior alone, and then measure the transformer. Two tests change the test source: permuting the token labels of a product source, where Bayes predicts the loss that remains however long the sequence is, and replacing one or both transition matrices mid-sequence, where Bayes predicts that two new matrices cost twice as much as one. Two stay inside the training family: whether tokens seen in other contexts help when a token appears for the first time, and at which source depth copying an earlier continuation starts to pay. The transformer meets all four predictions.

The precondition for these predictions is that the transformers learn the shared structure at all, and Section~\ref{sec:bayes-evidence} shows that they do. Two-layer models are close to informed Bayes when the structure is visible in the tokens: for coins, for the known encoding and for the uniform base. When the encoding or the base law is hidden, two layers are not enough, and eight-layer models trained on four times as many sequences close most of the remaining gap.

Section~\ref{sec:why} asks why the transformer can imitate (approximately) the informed Bayes: the training objective is minimized by informed Bayes whose prediction structure is a function of some statistics of the prefix that attention can select and accumulate, leaving only a map from, e.g., counts to a forecast to be learned. In the framework where an architecture is a prior over predictors \citep{feder2025framework}, training pays the complexity of informed Bayes under that transformer architecture's prior, while informed Bayes pays the complexity of the source under the family's prior. What the sources examples discussed here add is that these statistics are global rather than local, which explains which sources need more capacity and training, and where the residual gap sits.

\section{Sources and prediction}
\label{sec:sources-prediction}

A \emph{source family} is a set of sources $\{P_\theta\}_{\theta\in\Theta}$.
Each $P_\theta$ assigns a probability to every token sequence. To generate one
sequence, we draw the parameters $\theta$ from a prior $\pi$ on $\Theta$ and
then draw the tokens from $P_\theta$. Every sequence gets a fresh $\theta$.

\subsection{The source families}
\label{sec:family-sources}

All sources except the coins are first-order Markov chains; the
\emph{context} of a prediction is the previous token, and its
\emph{transition row} is the distribution of the next token.
Table~\ref{tab:sources-at-a-glance} in Appendix~\ref{app:strategies}
summarizes, for each source, what all sequences share and what each sequence
draws afresh.

\paragraph{Coins.}
The tokens are zeros and ones. Draw a bias $\theta$ once per sequence,
then draw the tokens independently with probability $\theta$ of one.
We use two priors in separate training conditions:
\[
 \Pr\{\theta=0.2\}=\Pr\{\theta=0.8\}=\tfrac12,
 \qquad\text{or}\qquad \theta\sim\operatorname{Unif}[0,1].
\]

\paragraph{LSA row prior.}
The other sources draw their transition rows from the following prior
\citep{feder2026lsa}: draw $L$ independent vectors uniformly from the
simplex on $d$ symbols, multiply them entry by entry, and normalize. We
write $\mathcal D_{d,L}$ for the resulting distribution and call $L$ its
\emph{depth}. For $L=0$ the vector is uniform, for $L=1$ uniform on the
simplex, and the larger $L$, the more it concentrates on a few symbols.

\paragraph{Product source: known encoding.}
Each token is a pair $(u,v)$ with $u\in\{1,\ldots,d_1\}$ and
$v\in\{1,\ldots,d_2\}$. A fixed and known table, the encoding, maps the
$d=d_1d_2$ pairs to token labels. The two coordinates follow independent
Markov chains, so
$P_\theta(u',v'\mid u,v)=T_1(u'\mid u)T_2(v'\mid v)$.
For each coordinate $i$, draw a depth $L_i$ uniformly from
$\{1,\ldots,L_{\max}\}$, and then draw every row of $T_i$ independently from
$\mathcal D_{d_i,L_i}$; the initial coordinates are independent and uniform.
We use $d_1=d_2=8$ and $L_{\max}=8$. The factorization helps because two
$8\times8$ matrices are learned from far fewer tokens than one $64\times64$
matrix.

\paragraph{Product source: hidden encoding.}
\label{sec:hidden-factor-source}
Here the encoding is hidden, and it changes from sequence to sequence. For
each sequence, draw every row of $T_1$ and of $T_2$ from $\mathcal D_{d_i,1}$,
draw a one-to-one table
$\sigma:\{1,\ldots,d_1\}\times\{1,\ldots,d_2\}\to\{1,\ldots,d_1d_2\}$
uniformly at random, and run the two coordinate chains $(U_t)$ and $(V_t)$
from independent uniform starting states. The observed token is
$X_t=\sigma(U_t,V_t)$, so
$P_\theta\bigl(\sigma(u',v')\mid\sigma(u,v)\bigr)
=T_1(u'\mid u)T_2(v'\mid v)$.
The matrices and the encoding stay fixed within one sequence and are drawn
anew for the next, so the same label can stand for different pairs in
different sequences. Informed Bayes knows $d_1$, $d_2$ and the sampling
rule, but none of the drawn quantities; it integrates over the matrices in
closed form and sums over all encodings
(Appendix~\ref{app:hidden-factor-bayes}). We use $d_1=2$ and $d_2=8$, giving
$16$ labels.

\paragraph{Uniform-base source.}
\label{sec:uniform-base-source}
Set $d=64$, draw one depth $L$ uniformly from $\{0,\ldots,8\}$, draw each
of the $64$ rows independently from $\mathcal D_{d,L}$, and start from a
uniform token. The rows share the depth and nothing else. This is the case
$\alpha=0$ of the next source, with a known uniform base law.

\paragraph{Shared-base source.}
\label{sec:shared-base-source}
Tokens are $\{1,\ldots,d\}$ with $d=64$. Draw independently an exponent
$\alpha$ uniformly from $[0.5,1.5]$, a depth $L$ uniformly from
$\{0,\ldots,8\}$, and a uniform random permutation $\sigma$ of the tokens.
The \emph{base law} $q$ is a Zipf distribution with exponent $\alpha$, and
$\sigma$ assigns its ranks to the tokens:
\[
 q_{\sigma(r)}=\frac{r^{-\alpha}}{\sum_{j=1}^d j^{-\alpha}},
 \qquad r=1,\ldots,d.
\]
For each context $a$, draw a probability vector $V_a$ from
$\mathcal D_{d,L}$ independently, and set the transition row:
\[
 T(b\mid a)=\frac{q_bV_a(b)}{\sum_cq_cV_a(c)}.
\]
Draw $X_1$ from $q$ and $X_{t+1}$ from $T(\cdot\mid X_t)$. One base law, one
depth and one transition matrix are used for the whole sequence. For $L=0$
every row equals $q$, and the tokens are independent. For $L>0$ every row is
the shared base law $q$, reshaped by the row's own vector $V_a$. Because $q$
and $L$ are shared by all rows, tokens seen in one context help prediction
in every other context.

\paragraph{Pooled families.}
For each sequence, draw one of the three $64$-symbol families (product with
known encoding, uniform base, shared base) with probability $1/3$ each, and
then draw a source from that family as above. Neither the family nor the
parameters are revealed.

\subsection{What the predictors know}
\label{sec:family-predictors}

The \emph{transformer} is trained by next-token prediction on sequences
drawn from its family and prior, never seeing the parameters. With its
weights fixed, on a test sequence it sees $x^t=(x_1,\ldots,x_t)$ and
predicts token $t+1$; the first token is given and not scored.

\emph{Informed Bayes}, written $B$, is told the training family and prior
but not the parameters of the test sequence. Its probability for a sequence
and for the next token are
\begin{equation}
 Q_B(x^n)=\int P_\theta(x^n)\,\pi(d\theta),
 \qquad
 Q_B(b\mid x^t)=\frac{Q_B(x^tb)}{Q_B(x^t)}.
 \label{eq:family-bayes}
\end{equation}
Equation~\eqref{eq:family-bayes} averages over all source parameters using the likelihood of the entire sequence. Parameters shared across context rows appear in several factors of this likelihood, so observations in one context affect predictions in another, with no separate parameter estimation. In pooled training, $\theta$ also includes the unknown family, so the same formula averages the three families' predictions with their posterior weights. Appendix~\ref{app:strategies} gives the formulas and numerical methods for each source.

\subsection{Prediction accuracy}
\label{sec:prediction-accuracy}

All losses are in bits. Let $Q_M(\cdot\mid x^t)$ be the next-token
probabilities of a predictor $M$. We write $M=\mathrm{TF}$ for a trained
transformer and $M=B$ for informed Bayes; thus $Q_{\mathrm{TF}}$ is the
transformer's predictive distribution. Let $\pi_{\mathrm{test}}$ denote
the distribution of source parameters used in the test. It equals the
training prior $\pi$ in full-prior tests; in fixed-setting tests, the
specified parameters are held fixed. The \emph{regret} of $M$ after $t$
observed tokens is the expected extra loss compared with a predictor that
knows the parameters $\theta$ of the sequence:
\begin{equation}
 r_M(t)=\mathbb E_{\substack{\theta\sim\pi_{\mathrm{test}}\\X^t\sim P_\theta}}
 \left[D_{\mathrm{KL},2}\!\left(
 P_\theta(\cdot\mid X^t)\Vert Q_M(\cdot\mid X^t)
 \right)\right].
 \label{eq:instantaneous-regret}
\end{equation}
Here $D_{\mathrm{KL},2}$ is the Kullback--Leibler divergence in bits. The
expectation first draws the source parameters from $\pi_{\mathrm{test}}$
and then the observed sequence from that source. The \emph{gap to Bayes}
of the transformer is
\begin{equation}
 \Delta(t)=r_{\mathrm{TF}}(t)-r_B(t)
 =\mathbb E_{\substack{\theta\sim\pi_{\mathrm{test}}\\X^{t+1}\sim P_\theta}}\!\left[
 \log_2\frac{Q_B(X_{t+1}\mid X^t)}
 {Q_{\mathrm{TF}}(X_{t+1}\mid X^t)}
 \right].
 \label{eq:transformer-bayes-gap}
\end{equation}

In a \emph{full-prior} test the parameters of each test sequence are drawn
from the training prior $\pi$; then the next token given $X^t$ has
distribution $Q_B(\cdot\mid X^t)$ by \eqref{eq:family-bayes}, and
\begin{equation}
 \Delta(t)=\mathbb E_{\substack{\theta\sim\pi\\X^t\sim P_\theta}}
 \left[D_{\mathrm{KL},2}\!\left(
 Q_B(\cdot\mid X^t)\Vert Q_{\mathrm{TF}}(\cdot\mid X^t)
 \right)\right]\geq0.
 \label{eq:bayes-optimality}
\end{equation}
In a full-prior test no predictor has a smaller expected loss than informed
Bayes, and the gap measures how far the two predictive distributions are
apart. The identity holds with any predictor in place of
$Q_{\mathrm{TF}}$; this is the sense in which $Q_B$ is the \emph{universal
probability} for the source family \citep{merhav1998universal}. In a
\emph{fixed-setting} test we fix some parameters, for example $\alpha=1$
and $L=4$, and draw the others from the prior; informed Bayes still uses the
full prior, and the gap can then be negative.

Both predictors are evaluated on the same test sequences; differences are formed within each sequence before averaging, and each $\pm$ denotes one standard error of the mean across sequences. Cumulative quantities sum over all scored predictions (Appendix~\ref{app:prediction-metrics}).

\section{How close do transformers get to informed Bayes?}
\label{sec:bayes-evidence}
\label{sec:bayes-comparison}

Figure~\ref{fig:bayes-summary} compares trained transformers with informed
Bayes on six sources, from a prior on one coin bias to a mixture of
families. Where shared structure is visible, two-layer transformers are
close to informed Bayes, and increasing capacity brings them closer still.
When the structure is hidden, more layers and training close most of the
gap. Table~\ref{tab:gap-summary} in Appendix~\ref{app:gap-table} summarizes the gaps; Appendix~\ref{app:bayes-evidence-details} gives the models, training, test sets and numerical checks.

\begin{figure}[t]
\centering
\includegraphics[width=0.8\linewidth]{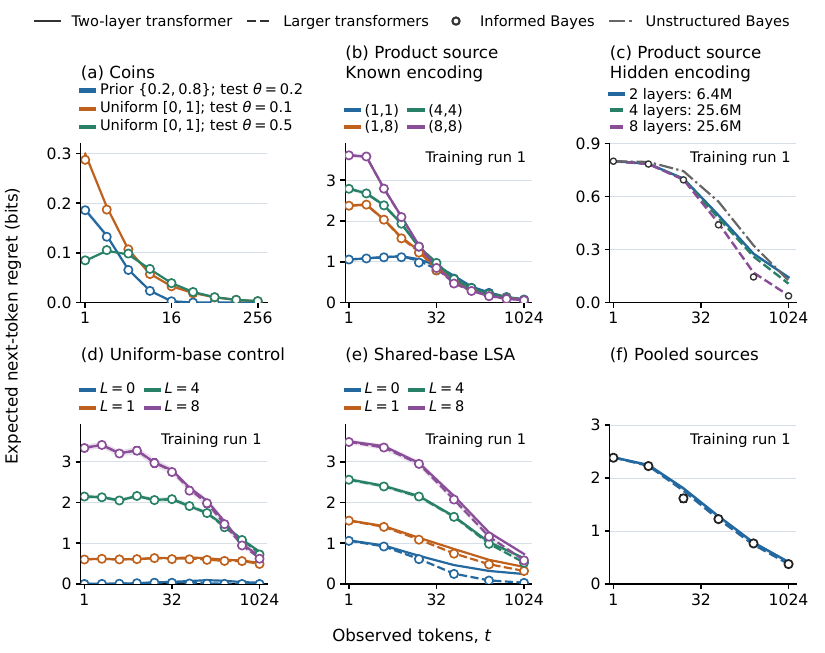}
\caption{\textbf{Transformers and informed Bayes on six sources.}
Expected next-token regret on held-out test sequences. Solid curves show
two-layer transformers; dashed curves show four layers in (b,d,f), four or
eight in (c), and eight in (e). Open circles show informed Bayes. Colors
distinguish source settings in (a,b,d,e) and models in (c).
(a) Coins: two-point prior at $\theta=0.2$; uniform prior at
$\theta=0.1,0.5$.
(b) Known $8\times8$ product encoding at four depth pairs.
(c) Hidden $2\times8$ encoding. Gray dash-dot ``Unstructured Bayes'' treats
the $16$ symbols as one Markov chain with independent transition rows, each under a uniform Dirichlet prior, ignoring
the product factorization.
(d) Uniform base and (e) shared base with $\alpha=1$, at $L=0,1,4,8$.
Bayes points in (e) extend through $t=1024$ at all four depths.
(f) Equal mixture of three $64$-symbol families; Bayes points through $t=1024$.
Panels (b)--(f) show run~1. Appendix~\ref{app:bayes-evidence-details}
gives both runs and protocols, including the additional training for
larger models in (c,e); budgets are equal within (b,d,f).}
\label{fig:bayes-summary}
\end{figure}

\subsection{A source prior: coins}
\label{sec:bayes-coins}

We train one two-layer model on biases $0.2$ and $0.8$ with equal
probability, and another on biases drawn uniformly from $[0,1]$. Both follow
informed Bayes at all tested biases and all $t$ (panel (a);
Appendix~\ref{app:coins-comparison} shows the other biases).

What is the prior worth? We test both models on the same sequences with bias
$0.2$ or $0.8$. Over $256$ predictions, the two-point-prior model loses $2.293\pm0.015$
bits less than the uniform-prior model; for the two Bayesian predictors the
difference is $2.290\pm0.015$ bits. The transformer has learned what the
prior is worth.

\subsection{A known encoding: product source}
\label{sec:bayes-product}

With a known $8\times8$ encoding, the two-layer transformer is close to
informed Bayes (panel (b)). Four-layer models trained on the same number of
sequences reduce the largest gap from $0.124$ bits to $0.044$ and $0.049$
bits in two runs (Appendix~\ref{app:known-product-comparison}). The Bayesian curve also
predicts how many tokens are needed to reach a target regret
(Appendix~\ref{app:application-context}).

\subsection{Discovering a factorization: hidden encoding}
\label{sec:bayes-hidden-factor}

When the encoding is hidden, the transformer must find out in each sequence
which labels belong together (panel (c)). After $1024$ tokens, the two-layer
models have regrets of $0.1430$ and $0.1335$ bits, near the $0.124$ bits of
unstructured Bayes, which ignores the product factorization. Eight-layer
models with more training reach $0.0402$ and $0.0397$ bits, close to the
$0.0378$ bits of informed Bayes. Four-layer models lie in between;
Appendix~\ref{app:hidden-factor-experiment} gives the full capacity and training comparisons.

\subsection{A known base law: uniform-base source}
\label{sec:bayes-uniform-base}

With the base law known, increasing capacity brings the transformer close
to informed Bayes at all four depths (panel (d)). With the same training
budget, four layers reduce the largest gap from $0.087$ bits to $0.037$
and $0.041$ bits in two runs. At $L=0$ the tokens are independent and
informed Bayes soon predicts uniformly; the four-layer models learn this
behavior too (Appendix~\ref{app:uniform-base-comparison}).

\subsection{An unknown shared base law: shared-base source}
\label{sec:bayes-shared-base}

With the base law unknown, eight-layer models with more training close most
of the two-layer gap (panel (e)). For both
runs, all four depths and all tested $t\le64$, the gap is below $0.007$
bits. At $t=256$ and $t=1024$, its absolute value is below $0.0052$ bits
at all four depths. All shared-base tests use $\alpha=1$;
Appendix~\ref{app:shared-base-comparison} gives the capacity and training
comparisons.

\subsection{Inferring the source family: pooled training}
\label{sec:bayes-pooled}

Here neither predictor is told which family generated the sequence (panel
(f)). We train on an equal mixture of the three $64$-symbol families;
informed Bayes combines their predictions with posterior family weights.
Panel (f) shows Bayes points through $t=1024$; the paired comparisons reported here use $t=1,4$ (Appendix~\ref{app:pooled-comparison}). On $1024$ new sequences per family, they show that the transformer reproduces this mixture prediction. With the same
training sequences, four-layer models reduce the gap after four tokens from
$0.0197$ and $0.0190$ bits to $0.0052$ and $0.0025$ bits in the two runs
(Appendix~\ref{app:pooled-comparison}).

Appendix~\ref{app:application-family} compares the cumulative cost of not
knowing the family with its Bayesian prediction.
\FloatBarrier

\section{Four predictions beyond the learning curve}
\label{sec:context-application}

\begin{table}[t]
\caption{\textbf{Predicted versus measured.} Each row is a quantity that informed Bayes fixes from the family and prior alone, next to the trained transformer's value on the same test sequences. Bits; $\pm$ one standard error; ``Bayes'' is the reference's own value on the same sequences where it is a random quantity.}
\label{tab:pred}
\centering\footnotesize
\setlength{\tabcolsep}{3pt}
\begin{tabular}{@{}lllll@{}}
\toprule
Test (\S) & Quantity & Predicted & Bayes & Transformer \\
\midrule
Prior (\ref{sec:bayes-coins}) & bits saved by the two-point prior (256 tokens) & --- & $2.290{\pm}0.015$ & $2.293{\pm}0.015$ \\
Mismatch (\ref{sec:application-mismatch}) & residual regret, depths $(1,1)$, $t{=}1024$ & $0.966{\pm}0.003$ & 1.039 & 1.040 \\
 & residual regret, depths $(4,4)$, $t{=}1024$ & $2.237{\pm}0.007$ & 2.299 & 2.309 \\
Change (\ref{sec:application-changes}) & extra regret, two new matrices, $t{=}1024$ & $0.345$ ($2\times0.173$) & --- & 0.350 \\
Unseen (\ref{sec:application-transfer}) & gain over uniform, shared base, $L{=}1$, $t{=}64$ & --- & 0.799 & 0.793 \\
 & gain over uniform, uniform base, any $L,t$ & 0 & 0 & $<0.0005$ \\
Copying (\ref{sec:application-copying}) & gain from token order, $L=0$ & $\le0$ & $<0$, all $t$ & $<0$, all $t$ \\
 & gain from token order, $L{=}3$, $t{=}64$ & --- & $0.262{\pm}0.027$ & $0.263{\pm}0.028$ \\
\bottomrule
\end{tabular}
\end{table}

We now test trained transformers in four further, more challenging, ways. In every test the transformer keeps its weights, informed Bayes keeps the training family and prior, and neither is told anything about the test. Each test yields a number that the family and prior fix in advance; Table~\ref{tab:pred} collects them next to the measured values, Figure~\ref{fig:applications} shows the curves, and Appendix~\ref{app:application-protocols} gives derivations and all results.

\begin{figure}[t]
\centering
\includegraphics[width=0.8\linewidth]{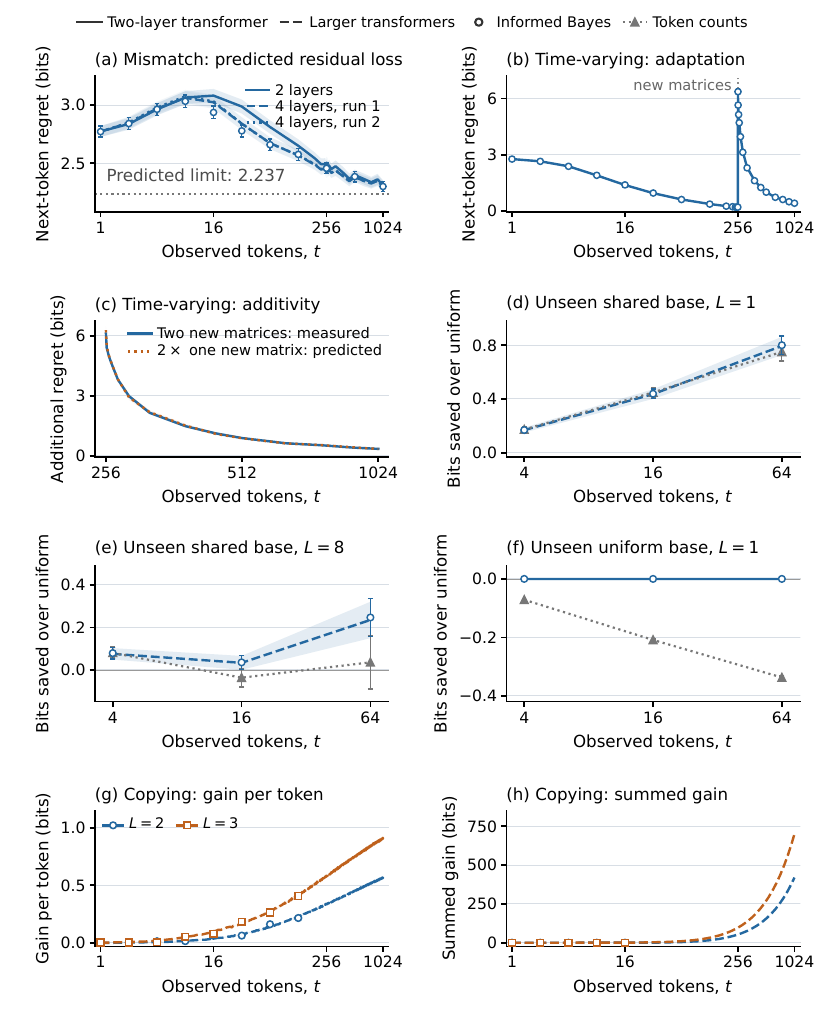}
\caption{\textbf{Four further tests of trained transformers against informed
Bayes.}
(a) Regret under randomly permuted product labels; gray dotted line:
predicted limit $R_\infty$.
(b,c) New transition matrices at $t=256$: regret with two new matrices (b);
additional regret with two versus twice that with one (c).
Panels (a)--(c) use depths $(4,4)$.
(d)--(f) Bits saved over uniform prediction when the current token is new:
shared base at $L=1,8$ (d,e); uniform base at $L=1$ (f).
(g,h) Run-1 shuffled-trained minus ordered-trained loss, per token (g) and
cumulative (h), compared with the Bayesian gain over its frequency-only
predictor through $t=16$, and at $t=32,64,128$ in (g).
At $t=128$, Bayes uses $256$ sequences for each of $L=2,3$.
Curves show transformers; open markers show Bayes in matching colors.
Solid curves show two layers; dashed curves show four layers in (a,g,h)
and eight in (d,e). The second four-layer run in (a) is dotted.
Gray curves show controls, including token counts in (d)--(f).
In (g,h), blue circles denote $L=2$ and orange squares $L=3$.
Horizontal axes use base-2 scales; shared-base panels use $\alpha=1$.
Intervals are pointwise 95\% in (a)--(c), (g) and (h); markers in (d)--(h) show
one sequence standard error.}
\label{fig:applications}
\end{figure}

\subsection{Source mismatch}
\label{sec:application-mismatch}

The transformer was trained on product sources with a known $8\times8$
encoding. In the test we apply a random permutation to the $64$ labels of a
product source, fixed throughout the sequence, which neither predictor is
told. Both still read every label as a pair $(U,V)$ with the training
encoding, but under the permutation the two coordinates are no longer
independent chains. Informed Bayes can only fit a product of two chains, and
its regret approaches
\begin{equation}
 R_\infty=H(U'\mid U)+H(V'\mid V)-H(U',V'\mid U,V),
 \label{eq:mismatch-product-limit}
\end{equation}
where $(U',V')$ follows $(U,V)$ and the entropies are those of the test
source: the information a product of two chains cannot use.

After $1024$ tokens, the two-layer transformer closely matches informed
Bayes at both tested depths (panel (a)). Increasing capacity brings the
intermediate predictions closer too: with the same training budget, both
four-layer runs reduce the depth-four gap at $32$ tokens from $0.209$ to
$0.062$ bits, about $70\%$. Appendix~\ref{app:product-shifts} gives the
predicted limits and both runs at both depths.

\subsection{Time-varying source}
\label{sec:application-changes}

The transformer was trained on product sources that do not change, and
informed Bayes uses that family and prior. The test source starts as a
product source from the family. After $256$ tokens, one coordinate or both
coordinates switch to a new transition matrix, drawn independently from the
same prior. The current state and the encoding carry over. The predictors see only the tokens; they are not reset and are not told about the change.

Informed Bayes keeps updating as if the source had not changed. At each
prediction, both the true next-token law and Bayes's forecast factor across
the two coordinates. Their KL divergence therefore adds across coordinates.
Let $r_u(t)$ be the expected regret of one coordinate when its matrix stays
unchanged, and $r_c(t)$ when it is redrawn. The coordinates have identical
laws and priors, and changing one affects neither the data nor the forecast
for the other. Consequently, the expected regrets with no, one, and two new
matrices are $2r_u(t)$, $r_u(t)+r_c(t)$, and $2r_c(t)$, respectively.
Subtracting the no-change regret gives, at every $t$ after the change:
\begin{equation}
 r_B^{\mathrm{both}}(t)-r_B^{\mathrm{stationary}}(t)
 =2\bigl[r_B^{\mathrm{one}}(t)-r_B^{\mathrm{stationary}}(t)\bigr].
 \label{eq:product-shift-additivity}
\end{equation}

Panel (b) shows close transformer--Bayes agreement before the switch, through the jump at the change, and during the subsequent decline.
The additional regret also agrees with the factor of two (panel (c)).
Simultaneous $95\%$ bands for the paired differences include zero at both
depths and all $14$ tested times after the change. Any predictor that treats the coordinates separately and
symmetrically has this doubling. Panel (c) therefore tests whether the
transformer retains the factorization; panel (b) tests whether its regret
matches informed Bayes. Appendix~\ref{app:product-shift-prediction} gives
the derivation and full comparison.

\subsection{An unseen context}
\label{sec:application-transfer}

We look at the predictions made when the current token appears for the
first time, so that nothing has been observed of its transition row, and
measure the bits saved over a uniform prediction; a predictor that uses only
token counts is added for comparison. In the shared-base source informed
Bayes can still predict, because transitions out of other tokens tell it
about the shared base law; in the uniform-base source the rows share no base
law and it predicts uniformly.

The transformers do the same (panels (d--f)): both eight-layer shared-base models are within $0.015$ bits of informed Bayes at every tested depth and context length. The token-count predictor is
close to Bayes at small depth, where token frequencies approximate the base
law, but at $L=8$ and $t=64$ it loses $0.211\pm0.049$ bits to Bayes while
the transformer loses $0.012\pm0.011$. In the uniform-base source the
transformer predicts almost uniformly, while token counts increase loss.
Each transformer thus uses the other contexts exactly when its family makes
them informative (Appendix~\ref{app:application-transfer}).

\subsection{When copying starts to pay}
\label{sec:application-copying}

When token $a$ appears again, a copying predictor raises the probability of
the token that followed $a$ before. This uses the order of the tokens, which
a frequency-only predictor cannot. At depth $L=0$ every row of the
shared-base source equals the base law, so the order is worth nothing and
looking for it costs; at positive depth the order helps once enough tokens
have been seen. We ask whether copying emerges in the transformer where
informed Bayes says it should. Next to informed Bayes $B$ we use a Bayesian
predictor $B_{\mathrm{freq}}$ that keeps the same prior and first-token law
but models every later token as an independent draw from the stationary law
of the source (Appendix~\ref{app:shared-shuffle-control}); it approximates
the training law of the shuffled-trained transformer below. With $\ell_M(t)$
the loss of predictor $M$ on token $t+1$, the gain from the order and its
running sum are
\begin{equation}
 g_B(t)=\mathbb E[\ell_{B_{\mathrm{freq}}}(t)-\ell_B(t)],\qquad
 G_B(n)=\sum_{t=1}^{n}g_B(t).
 \label{eq:shared-shuffle-payoff}
\end{equation}
For the transformer we compare a matched pair of four-layer models. One
sees ordered training sequences; the other sees positions
$2,\ldots,1025$ shuffled within each sequence, preserving the first token
and all token counts. It can learn frequencies but cannot learn to copy.
Both are tested on the same ordered sequences; their loss difference $g_{\mathrm{TF}}(t)$ and its running sum $G_{\mathrm{TF}}(n)$ measure what copying is worth to the transformer.

Copying emerges where Bayes says it should. At $L=0$ both gains are negative at every measured context. At $L=2$ and $L=3$ both gains become positive, larger
at the greater depth, and of the same size at the measured contexts. At
$L=3$ and $t=64$, Bayes gains $0.262\pm0.027$ bits and the run-1
transformer gains $0.263\pm0.028$ bits.
The shuffled-trained transformer matches
$B_{\mathrm{freq}}$ within $0.0043$ bits at all sixteen tokens, so it has
learned the frequencies and nothing more. At longer contexts the
transformer's gain keeps growing, to $0.57$ and $0.91$ bits per token at
$t=1024$ for $L=2,3$ (panels (g,h)); Appendix~\ref{app:shared-shuffle-control} gives both runs and matched training settings.
\FloatBarrier

\section{Why a transformer can approximately imitate informed Bayes}
\label{sec:why}

Three facts make the imitation possible. First, informed Bayes minimizes
expected training loss: for any predictor $Q$, when source parameters follow
the training prior $\pi$,
\begin{equation}
\begin{aligned}
\mathbb E_{\substack{\theta\sim\pi\\X^{t+1}\sim P_\theta}}
 \left[-\log_2 Q(X_{t+1}\mid X^t)\right]
&=\mathbb E_{\substack{\theta\sim\pi\\X^{t+1}\sim P_\theta}}
 \left[-\log_2 Q_B(X_{t+1}\mid X^t)\right]\\
&\quad+\mathbb E_{\substack{\theta\sim\pi\\X^t\sim P_\theta}}
 \left[D_{\mathrm{KL},2}\!\left(
 Q_B(\cdot\mid X^t)\Vert Q(\cdot\mid X^t)
 \right)\right],
\end{aligned}
\label{eq:objective}
\end{equation}
because $X_{t+1}\mid X^t$ has distribution $Q_B(\cdot\mid X^t)$.
The divergence vanishes only when $Q=Q_B$, so $\Delta(t)$ in
\eqref{eq:bayes-optimality} measures the remaining excess training loss without
requiring parameter labels. Ignoring structure means using the universal
probability of a larger family and paying its larger regret
\citep{merhav1998universal}, as in Figure~\ref{fig:bayes-summary}(c). The same argument explains why transformers and LSTMs meta-trained on random tree sources reach, e.g., the loss of the universal context-tree-weighting mixture \citep{graumoya2024universal}.

Second, informed Bayes needs memory even for memoryless sources: coin
prediction uses the counts of zeros and ones. Shared-base prediction uses
all rows; pooled prediction also uses each family's evidence. A transformer
reads the prefix and can approximate this growing state. Finite-state
prediction bounds relate the remaining regret to the number of states
\citep{meron2004finite,ingber2006prediction,dar2011finite}; likewise, a
transformer's context, width, depth and training limit its approximation.
In the framework of \citet{feder2025framework}, the randomly initialized architecture is a prior over predictors and the sequences needed to reach a target are set by the target's complexity under that architecture prior; here the target is $Q_B$, so the transformer pays two universality costs: across sequences for $Q_B$ relative to the architecture, and within a sequence for $\theta$ relative to the family; the gaps of Sections~\ref{sec:bayes-evidence},\ref{sec:context-application} measure the first.

Third, Bayesian prediction decomposes as
$Q_B(b\mid x^t)=F(S(x^t))_b$: sufficient statistics $S$ followed by a
forecast map $F$. For the independent-row chain of
\citet{edelman2024evolution}, $S$ counts successors in the current context
and $F$ is the add-one rule. Attention to earlier occurrences of the current
token supplies \emph{selection} and \emph{accumulation}; the feed-forward
network learns the prior-specific \emph{estimator} $F$. Here $S$ is global:
the shared base and depth use all rows, the hidden partition must be inferred
from the count matrix, and family selection uses a softmax of accumulated
evidence. Appendix~\ref{app:why} develops this decomposition, its relation
to architecture-based complexity, and its implications for capacity and
training.

\section{Discussion}
\label{sec:conclusion}

Training on freshly drawn source parameters teaches transformers what sequences share, and their predictions approach informed Bayes, with larger models and more training needed when the encoding or base law is hidden. From the family and prior alone, informed Bayes predicts the costs of a wrong factorization and a source change, the benefit of other contexts for a new token, and when copying starts to pay. A trained network can fail each prediction, so these tests measure what it has learned and give circuit and training-dynamics studies a target \citep{raventos2023pretraining,falck2024martingale}.

The same account makes predictions for language models, where the family is not specified: documents sharing latent structure across contexts are the shared-base case, and the unseen-context test says when a model should generalize to a context never seen continued; retrieval of a first-occurrence continuation is the copying test, with a depth-dependent onset; the mismatch test says that a model keeps paying its training family's price on out-of-distribution text however long the prompt, and the change test that adaptation to a shift is as slow as a stationary posterior. When pretraining data are not fresh draws from a stable class \citep{raventos2023pretraining,chan2022data,falck2024martingale} there is no informed Bayes to imitate, and that is where the correspondence should break.

Our evidence is limited to synthetic sources, alphabets of at most $64$ symbols, at most $26$ million parameters, some baselines with one training run, and shared-base tests at $\alpha=1$; the hidden-encoding and shared-base comparisons change size, training amount and schedule together. Agreement in loss does not establish posterior computation. Section~\ref{sec:why} suggests a design to test: learned selection, exact accumulation, and the family's closed-form estimator.

\label{end:main}

\section*{AI use statement}
We used large language models to write and test the code for the sources,
the Bayesian predictors, the training and the evaluation, to draft and edit
the text, to search the literature and to make the figures. They did not
propose experiments. The authors checked all derivations, numerical results
and citations, and take full responsibility for the content of the paper.

\section*{Reproducibility statement}
Appendix~\ref{app:strategies} defines all Bayesian predictors. Appendices~\ref{app:bayes-evidence-details} and~\ref{app:application-protocols} give the models, the training settings, the test sets and the numerical checks, including the numerical error allowances applied to every reported Bayesian value. The code is available at \url{\coderepo}.

\clearpage
\bibliographystyle{iclr2027_conference}
\bibliography{refs}
\clearpage
\appendix
\raggedbottom

\section{Related work: Bayesian prediction and learned source structure}
\label{app:related}

This paper brings together four lines of work: Bayesian accounts of
in-context learning, transformer prediction on sequential sources,
hierarchical models that share information across contexts, and universal
prediction under log loss. The Bayesian interpretation itself is established
in prior work. Our use of it is quantitative: for fully specified structured
source priors, we compute the informed posterior predictive distribution at
every number of observed tokens and ask whether it predicts both transformer loss and
the response to controlled interventions.

\subsection{Bayesian accounts of in-context learning}

\citet{xie2022explanation} model pretraining data as a mixture of latent
concepts and explain in-context learning as implicit Bayesian inference of the
concept generating the prompt. Their analysis gives asymptotic guarantees
for a hidden-Markov construction. \citet{muller2022transformers} take a
complementary amortized-inference approach: prior-data fitted networks are
trained on samples from a specified prior and learn its posterior predictive
map. Together, these works establish the central possibility used here:
training examples can convey a prior, while the prompt updates beliefs about
the latent parameters of a new example.

Several subsequent results sharpen that connection. In regression,
\citet{panwar2024bayesian} find that sufficiently capable transformers track
tractable Bayesian predictors across individual function families and
mixtures of families, while also documenting departures from the Bayesian
rule outside the training families. \citet{zhang2025bayesian} show that perfect pretraining
implements Bayesian model averaging under their dynamic prompt model, and
separate the resulting in-context error from approximation and pretraining
errors. \citet{jeon2024information} decompose Bayes-optimal prediction error
into learning the task distribution across sequences and learning the current
task within a sequence. This distinction corresponds directly to our setup:
the transformer must acquire the source family and prior during training, then
infer the freshly drawn parameters from the test prefix.
\citet{bai2023transformers} provide a different, constructive route: a
transformer can run several statistical procedures in context and select
among them using validation. Our pooled predictor averages source
families with their posterior probabilities, so both constructions address
algorithm selection but implement different decision rules.

Bayesian behavior cannot be inferred from good average loss alone.
\citet{raventos2023pretraining} show that the
effective in-context estimator in linear regression changes with pretraining
task diversity. \citet{falck2024martingale} derive a necessary martingale
condition for Bayesian prediction in exchangeable settings and find
violations in language models. We therefore treat informed Bayes as an
optimal, numerically testable reference rather than as an assumption about a
trained network. Relative to these works, our focus is next-token prediction
from dependent discrete sequences, with the complete training prior specified
and with transformer and Bayesian losses evaluated on the same prefixes. The
source interventions in Section~\ref{sec:context-application} test additional
consequences of the same posterior predictive rule rather than relying on a
single agreement in average loss.

\subsection{Markov sources, induction heads, and learned estimators}

The copying mechanism associated with Markov prediction has a well-developed
mechanistic literature. \citet{elhage2021framework} identify induction-head
circuits that use an earlier occurrence of a token to retrieve its
continuation. \citet{olsson2022context} connect the formation of such heads
to a sharp change in in-context learning during training, with causal evidence
in small attention-only transformers and correlational evidence in larger
models. \citet{singh2024induction} causally isolate interacting subcircuits
that drive induction-head formation, while
\citet{induction2025predicting} relate formation time to repetition,
continuation reliability and the amount of training data. These works
motivate our experiments on token order, but our claims concern predictive
distributions and losses, not the presence of a particular attention
circuit.

Random Markov sources make the learned estimator explicit.
\citet{edelman2024evolution} show a training progression from uniform to
unigram and then bigram prediction, and identify statistical induction heads
that estimate bigram probabilities in context. \citet{rajaraman2024markov}
show that a constant-depth transformer can represent the in-context
conditional empirical distribution for Markov sources of arbitrary fixed
order, while their attention-only construction composes induction heads.
\citet{bondaschi2025markovlaplace} show that a single-layer Mamba can implement
Laplace smoothing for Markov chains and compare this behavior with
transformers. \citet{akyurek2024context} likewise find that transformers
trained on random formal languages use in-context $n$-gram statistics. These
papers establish counting and smoothing of context-conditioned continuations
as learned sequential algorithms.

Other work lets the dependency structure itself vary. For variable-order
Markov chains, \citet{zhou2024vomc} use context-tree weighting as the
reference, construct a transformer that implements it exactly, and find that
two or more layers can learn the task empirically. \citet{wei2026causal}
study prompts with randomly sampled causal dependencies and show that a
two-layer transformer can implement, and trained models can approximate,
Bayesian model averaging over those structures. Their numerical transition
kernel is shared across the data set, whereas our source parameters are drawn
afresh for every sequence. \citet{dangelo2026induction}
show that induction heads need not merely count exact matches: finite
attention scales interpolate across partial context matches, while a
beginning-of-sequence token supplies Dirichlet-like pseudo-count smoothing.
Their hierarchical Dirichlet experiment also shows that trained models use
suffix-related evidence from other contexts. It compares learned smoothing
with count-based baselines; our comparison evaluates the full posterior
predictive distribution under a different shared hierarchy.
\citet{dangelo2025selective} show that transformers can select the relevant
lag when examples with different Markov orders are interleaved, locating an
explicit circuit for context-dependent structural selection.
Work on training dynamics explains how such rules arise. For example,
\citet{bietti2023birth} analyze the staged formation of associative memories,
and \citet{nichani2024causal} prove that gradient descent can learn latent
causal structure in a simplified prediction problem.

Our sequential experiments start from these results and change what must be
inferred. Product sources test a known or sequence-specific hidden
factorization; shared-base sources couple all transition rows through an
unknown base law and depth; pooled training adds uncertainty over the source
family. Rather than compare only with empirical counts or a generic smoothing
rule, we marginalize the stated prior and compare the transformer with that
finite-sample posterior predictive distribution. The result complements
representational and optimization analyses: it identifies the prediction
that full use of the training distribution would produce and separates that
output-level question from the circuit and training path that realize it.

\subsection{Hierarchical priors, shared structure, and transfer}

Sharing statistical strength across contexts is classical in language
modeling. \citet{mackay1995hierarchical} couple multinomial contexts through
a hierarchical Dirichlet model, and \citet{teh2006hierarchical} use
hierarchical Pitman--Yor processes to share information across $n$-gram
contexts while retaining power-law behavior. These models supply direct
precedents for our shared-base source: observations from one transition row
can improve prediction in another because the rows have common latent
parameters. Our row prior and latent variables differ, but the pooling
principle is the same and belongs to this hierarchical Bayesian tradition.

Transformers have also been studied as learners of an unknown population
prior. \citet{teh2025empiricalbayes} train transformers for Poisson empirical
Bayes and prove that sufficiently wide models can approach an oracle that
knows the prior, while small models perform competitively in experiments.
At the level of pretraining distributions, \citet{chan2022data} show that
properties such as within-sequence burstiness and token meaning across
sequences determine whether in-context learning emerges, and
\citet{raventos2023pretraining} show that task diversity changes whether a
network behaves like inference over a finite task set or like a continuous
prior predictor. \citet{deora2025occam} explain selection among Markov orders
through posterior weights that trade data fit against model complexity.
\citet{elmoznino2025occam} interpret next-token loss as prequential coding and
connect in-context hypothesis choice to an Occam tradeoff. These results all
emphasize that the structure of the training
distribution, rather than sequence data alone, determines the learned rule.

There are also direct precedents for two applications in
Section~\ref{sec:context-application}. \citet{genewein2023nonstationary}
train memory-based sequence models, including transformers, on piecewise
stationary sources and compare them with Bayesian algorithms that infer
latent change points and segment parameters. Our source-change experiment
freezes predictors trained on stationary sources, gives them no switching
model, and tests the additive response implied by two independent components.
Comparing early and cumulative log loss also belongs to the catch-up and
switching literature \citep{vanerven2007catching,vanerven2012catchup}. Our
ordered-versus-shuffled experiment uses this accounting to measure when
copying an earlier continuation pays for fixed trained predictors; it does
not propose a switching distribution.

The present paper makes that dependence directly measurable for sequential
sources. Every sequence redraws its local parameters, while the training
distribution fixes which coordinates, transition rows, and source families
are related. The informed Bayesian predictor therefore isolates the value of
each relation. The unseen-context experiment measures transfer through a
shared base law; the source-change experiment tests additivity across
independent latent components; the relabeling experiment measures the cost of
retaining a factorization outside its assumed family; and the pooled
experiment performs posterior averaging over source families. These are
controlled consequences of the stated source priors, rather than a claim that
hierarchical pooling or Bayesian model selection is new.

\subsection{Universal prediction and computable source priors}

Under log loss, the cumulative loss of a sequential probability assignment
is a codelength. A Bayesian mixture is optimal in expectation under its
stated prior. Universal and minimax codes instead control redundancy or
regret over a source class \citep{merhav1998universal,rissanen1984universal,
clarke1990information,xie2000asymptotic}. The
Krichevsky--Trofimov probability gives a canonical universal code for unknown
symbol probabilities \citep{krichevsky1981performance}, while context-tree
weighting averages efficiently over variable-memory sources
\citep{willems1995ctw}. Mixtures across source classes and hierarchical
universal coding extend this principle to uncertainty over the model itself
\citep{ryabko1984twice,feder1996hierarchical}; classical work
also connects compression to selection of Markov order
\citep{merhav1989order,csiszar2000bic}. Bayesian redundancy and cumulative
prediction risk are also characterized through mutual information
\citep{haussler1997mutual}. The minimum-description-length principle supplies
a complementary model-selection interpretation \citep{rissanen1978modeling}.
Our regret definitions and
family mixtures use this established prediction--compression framework.
A separate strand asks what a predictor with limited resources can achieve:
\citet{meron2004finite} characterize the regret of finite-state universal
predictors of individual binary sequences as a function of the number of
states, \citet{ingber2006prediction} design deterministic finite-state
predictors non-asymptotically, and \citet{dar2011finite} extend the
finite-memory analysis to continuous sequences. Our comparison of
transformers of increasing depth and training with the unconstrained
Bayesian predictor is the analogous question for a different resource, and
Appendix~\ref{app:why} makes the connection explicit.

\citet{feder2025framework} propose an information-theoretic framework for modern machine learning in which a randomly initialized architecture is a prior over predictors and learning is universal prediction with respect to that prior. The cost of learning a target predictor is its \emph{architecture-based complexity}, the negative log of the prior mass that the architecture places near the target, and the distribution of this cost over targets---the architecture's \emph{complexity spectrum}---is what distinguishes one architecture from another. \citet{feder2026lsa} carry the program out exactly for the layered simplex architecture: its prior on large-alphabet distributions has a computable Bayesian mixture, a computable learning curve and a complexity spectrum that is broad where a symmetric Dirichlet prior is narrow. The present paper uses that architecture in the opposite role. The LSA prior is here the \emph{source} prior, and its row integrals are the building blocks from which informed Bayes is assembled for known and hidden products, shared latent bases and depths, and mixtures over source families; where these combinations no longer have closed forms, we use controlled numerical marginalization and validate the resulting predictive probabilities as described in Appendix~\ref{app:reference-numerics}. The transformer, meanwhile, is the architecture whose prior over predictors is being probed, and informed Bayes is the target whose architecture-based complexity training has to pay. Appendix~\ref{app:twolevels} states this two-level picture.

Thus universal coding, the Bayesian interpretation of in-context learning,
and Markov prediction by induction mechanisms all precede this work. Our
contribution is to join them in a set of source families for which the learned
structure has a quantitative price: informed Bayes predicts the
learning curves, and the same source model predicts transfer, adaptation,
misspecification loss, and the emergence of copying.

\section{Supplemental material for Section~\ref{sec:sources-prediction}: the Bayesian predictors}
\label{app:strategies}

This appendix defines the Bayesian predictors used in the paper. For coins
the posterior prediction has a simple formula. For the LSA-based sources we
first derive the probability of the counts in one row from the integrals of
\citet{feder2026lsa}, and then combine the rows into the probability of a
sequence. We then sum over the unknown encoding of the product source. The
mixture over families is described with its experiment in
Appendix~\ref{app:pooled-comparison}. The frequency-only predictor of
Section~\ref{sec:application-copying} is defined in
Appendix~\ref{app:shared-shuffle-control}.
Table~\ref{tab:sources-at-a-glance} summarizes the sources.

\begin{table}[tbp]
\caption{\textbf{The six sources.} Training can teach what the middle
column lists. A new sequence must reveal what the right column lists.}
\label{tab:sources-at-a-glance}
\centering\small
\begin{tabular}{p{0.22\linewidth}p{0.35\linewidth}p{0.31\linewidth}}
\toprule
Source & Shared by all sequences & Drawn afresh for every sequence\\
\midrule
Coins & the prior on the bias & the bias\\
Known encoding & the two coordinates, the encoding, the prior on the depths & the two depths and the two transition matrices\\
Hidden encoding & the two coordinates & the encoding and the two transition matrices\\
Uniform base & the uniform base law, the prior on the depth & the depth and the $64$ transition rows\\
Shared base & one base law reshaped by every row, the priors on $\alpha$ and on the depth & $\alpha$, the ranks of the tokens, the depth and the rows\\
Pooled families & the three families and their priors & the family and its parameters\\
\bottomrule
\end{tabular}
\end{table}

\subsection{Coins: sequence probability and prediction}
For a prefix $x^n$, let $n_1=\sum_{i=1}^n x_i$ be its number of ones and
$n_0=n-n_1$ its number of zeros. Under a prior $\pi$ on the coin bias, the
Bayesian probability of this particular ordered sequence is
\[
 Q_B(x^n)=\mathbb E_{\theta\sim\pi}\!\left[
 \theta^{n_1}(1-\theta)^{n_0}\right].
\]
For the two priors used here, this gives
\[
 Q_B(x^n)=
 \begin{cases}
 \tfrac12\left[0.2^{n_1}0.8^{n_0}+0.8^{n_1}0.2^{n_0}\right],
     &\text{equal prior mass at }0.2,0.8,\\[3pt]
 \dfrac{n_1!\,n_0!}{(n+1)!},&\text{uniform prior on }[0,1].
 \end{cases}
\]
The probability of a one next is the ratio of the probability of the prefix
followed by a one to that of the prefix itself:
\[
 Q_B(1\mid x^n)=\frac{Q_B(x^n1)}{Q_B(x^n)}.
\]
For the two-point prior, this ratio is the average of the biases $0.2$ and
$0.8$ weighted by their posterior probabilities, which are proportional to
$\theta^{n_1}(1-\theta)^{n_0}$. For the uniform prior, the ratio gives
$(n_1+1)/(n+2)$. In both cases, the counts include the first token,
although its loss is not scored.

\subsection{LSA row prior: row probability and prediction}

Let $m=(m_1,\ldots,m_d)$ be a vector of nonnegative integer counts,
with $k=\sum_bm_b$. A row with depth $L$ and base probability vector
$w=(w_1,\ldots,w_d)$, where $w_b>0$ and $\sum_bw_b=1$, has the representation
\[
 W_{L,b}=\prod_{h=1}^{L}E_{b,h},\qquad
 P_b=\frac{w_bW_{L,b}}{\sum_c w_cW_{L,c}},
 \qquad E_{b,h}\overset{\mathrm{iid}}\sim\operatorname{Exp}(1).
\]
Here $W_{0,b}=1$; the simplex-factor normalizations cancel.
The probability of a particular ordered sequence of row observations with
counts $m$ is
\[
 Z_L(m;w)=\mathbb E_{\{E_{b,h}\}}\!\left[\prod_bP_b^{m_b}\right],
 \qquad Z_L(0;w)=1.
\]
The expectation averages over the independent exponential factors above,
with $L$ and $w$ fixed.
The next-token prediction is $Z_L(m+e_b;w)/Z_L(m;w)$.

Define the transforms
\[
 \phi_{L,r}(s)=\mathbb E_{W_L}\!\left[W_L^r e^{-sW_L}\right],\qquad r\ge0,
\]
where $W_L$ has the common distribution of the $W_{L,b}$ and $r$ is an
integer. Conditioning on the last exponential factor gives
\begin{equation}
 \phi_{0,r}(s)=e^{-s},\qquad
 \phi_{L,r}(s)=\int_0^\infty e^{-x}x^r
                      \phi_{L-1,r}(sx)\,dx\quad(L\ge1).
 \label{eq:source-weight-transform-recursion}
\end{equation}
In particular, $\phi_{1,r}(s)=r!/(1+s)^{r+1}$.

\begin{lemma}
\label{lem:lsa-code-integral}
For $k\ge1$, the row probability is
\begin{equation}
 Z_L(m;w)=\frac1{\Gamma(k)}\int_0^\infty s^{k-1}
       \prod_{b=1}^d w_b^{m_b}\phi_{L,m_b}(sw_b)\,ds.
 \label{eq:weighted-lsa-code}
\end{equation}
For a uniform base, write $Q_L(m)=Z_L(m;\mathbf 1/d)$.
Then
\begin{equation}
 Q_L(m)=\frac1{\Gamma(k)}\int_0^\infty s^{k-1}
                       \prod_{b=1}^d\phi_{L,m_b}(s)\,ds.
 \label{eq:lsa-code-integral}
\end{equation}
\end{lemma}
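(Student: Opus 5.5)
The plan is to remove the normalization in $P_b$ with the Gamma-function identity $D^{-k}=\frac{1}{\Gamma(k)}\int_0^\infty s^{k-1}e^{-sD}\,ds$, valid for $D>0$ and $k\ge1$. Write $D=\sum_c w_cW_{L,c}$. Then
\[
\prod_bP_b^{m_b}=D^{-k}\prod_b (w_bW_{L,b})^{m_b}
=\frac1{\Gamma(k)}\int_0^\infty s^{k-1}\prod_b (w_bW_{L,b})^{m_b}e^{-sw_bW_{L,b}}\,ds .
\]
The point is that after this substitution the integrand factorizes over $b$.

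Next I would take the expectation over the exponential factors and exchange it with the $s$-integral by Tonelli, which applies because everything is nonnegative. The $W_{L,b}$ for different $b$ are products of disjoint sets of i.i.d.\ $\operatorname{Exp}(1)$ variables, so they are independent with common law $W_L$. The expectation of the product therefore becomes the product of expectations,
\[
\prod_b w_b^{m_b}\,\mathbb E\bigl[W_L^{m_b}e^{-(sw_b)W_L}\bigr]=\prod_b w_b^{m_b}\phi_{L,m_b}(sw_b),
\]
by the definition of $\phi_{L,r}$. This gives \eqref{eq:weighted-lsa-code}.

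For the uniform base, set $w_b=1/d$. The product contributes $d^{-k}\prod_b\phi_{L,m_b}(s/d)$. Changing variables $s=du$ gives $s^{k-1}\,ds=d^{k}u^{k-1}\,du$, which cancels the $d^{-k}$ and yields \eqref{eq:lsa-code-integral}. Equivalently, $P_b$ is invariant under rescaling $w$, so one may take $w_b=1$ from the start.

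The case $L=0$ is consistent with this argument, since $W_0=1$ and $\phi_{0,r}(s)=e^{-s}$. The integral then reduces to $\prod_b w_b^{m_b}$, as it should.

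The main thing to check is that no step is merely formal. That needs $D>0$ almost surely, which holds because the exponential factors are positive almost surely, and the interchange of expectation and integral, which Tonelli covers. The requirement $k\ge1$ is essential, because the Gamma identity fails at $k=0$. That case is handled separately by the convention $Z_L(0;w)=1$.
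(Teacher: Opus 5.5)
Your proposal is correct and follows the paper's own proof: substitute $S^{-k}=\Gamma(k)^{-1}\int_0^\infty s^{k-1}e^{-sS}\,ds$, exchange the nonnegative integral with the expectation, factor the expectation using the independence of the $W_{L,b}$, and handle the uniform base by rescaling the integration variable by $d$. Your remarks on positivity of $D$, Tonelli, and the $L=0$ consistency check are sound additions.
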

\begin{proof}
Set $S=\sum_b w_bW_{L,b}$. Substitute
$S^{-k}=\Gamma(k)^{-1}\int_0^\infty s^{k-1}e^{-sS}\,ds$
into the definition of $Z_L$, exchange the nonnegative integral
and expectation, and use independence of the $W_{L,b}$. For
the uniform base, change the integration variable by a factor $d$.
\end{proof}
At depth zero, $Z_0(m;w)=\prod_bw_b^{m_b}$. At depth one with a
uniform base,
\begin{equation}
 Q_1(m)=\frac{(d-1)!\prod_bm_b!}{(k+d-1)!},
 \qquad\frac{Q_1(m+e_b)}{Q_1(m)}=\frac{m_b+1}{k+d}.
 \label{eq:lsa-depth-one-code}
\end{equation}

\subsection{LSA-based sources: sequence probability and prediction}

Shared latent parameters must be averaged after multiplying the
context-row probabilities. Let $M_a(b)$ count transitions from $a$ to
$b$ in the observed sequence, and write $h=(\alpha,L,\sigma)$ for
the shared parameters with their stated prior $\pi(dh)$. Then
\begin{equation}
 Q_B(x^n)=\mathbb E_h\!\left[
 q(x_1)\prod_{a=1}^d Z_L(M_a;q)\right].
 \label{eq:shared-family-evidence}
\end{equation}
The average over the base and depth is outside the product over
contexts. For the uniform-base control, only the common depth is
averaged. For a product chain, each coordinate has a uniform base,
an initial factor $1/d_i$ and its own depth prior, giving
$Q_B(x^n)=Q_{B,1}(u^n)Q_{B,2}(v^n)$. For the shared-base source,
the informative factor $q(x_1)$ remains in the inference even though
the first token is not scored.

For the shared-base source, the posterior is
\begin{equation}
 \pi(dh\mid x^t)=
 \frac{q(x_1)\prod_a Z_L(M_a;q)}{Q_B(x^t)}\,\pi(dh).
 \label{eq:shared-base-posterior}
\end{equation}
Here $q$ is determined by $\alpha$ and $\sigma$. The next-token
prediction is therefore
\begin{equation}
 Q_B(b\mid x^t)=\int
       \frac{Z_L(M_{x_t}+e_b;q)}{Z_L(M_{x_t};q)}
       \,\pi(dh\mid x^t).
 \label{eq:shared-base-predictive}
\end{equation}
All contexts and the initial token inform the joint posterior on
exponent, rank assignment and depth.

\subsection{Product source with hidden encoding: exact marginalization}
\label{app:hidden-factor-bayes}

For a candidate encoding $\sigma$, decode $x^n$ into two coordinate
sequences. Let $m^{\sigma}_{i,a,b}$ count transitions from $a$ to $b$
in coordinate $i$, whose alphabet has size $d_i$.
Integrating its independent uniform-simplex rows gives
\begin{equation}
 Q_\sigma(x^n)=\frac1{d_1d_2}
 \prod_{i=1}^2\prod_{a=1}^{d_i}
 \frac{\Gamma(d_i)\prod_{b=1}^{d_i}
       \Gamma(1+m^{\sigma}_{i,a,b})}
      {\Gamma(d_i+\sum_bm^{\sigma}_{i,a,b})}.
 \label{eq:hidden-factor-evidence}
\end{equation}
Conditional on $\sigma$, the forecast multiplies two add-one
coordinate predictions and maps back to observed labels. Bayes
averages these forecasts with weights proportional to $Q_\sigma(x^n)$.

Relabeling either coordinate leaves its prior unchanged; when $d_1=d_2$,
interchanging the coordinates does too. Consequently we need only one
encoding from each equivalence class, with a uniform prior on the
\[
 K_{d_1,d_2}=\frac{(d_1d_2)!}{d_1!d_2!}
 \begin{cases}1,&d_1\ne d_2,\\[1pt]1/2,&d_1=d_2\end{cases}
\]
classes. The marginal probability is
$Q_B(x^n)=K_{d_1,d_2}^{-1}\sum_{\sigma}Q_\sigma(x^n)$.
This is exact finite summation with analytic row probabilities;
no sampling approximation to the posterior is required. Its cost
grows rapidly: the $2\times8$ source used here has
$K_{2,8}=259\,459\,200$ equivalence classes.

Relative to a reference given the encoding $\sigma_*$ but not the
transition matrices, cumulative excess loss is
$\log_2[Q_{\sigma_*}(x^n)/Q_B(x^n)]\le\log_2K_{d_1,d_2}$.
The first-token law is identical, so excluding that token preserves
the bound. The prior-averaged cost of encoding uncertainty need not attain it.

\section{Why a transformer can approximate an informed Bayes: the mechanism in detail}
\label{app:why}

This appendix expands Section~\ref{sec:why}: first the objective, then
the decomposition of informed Bayes into selection, accumulation and
estimation, what each of the present sources adds to it, why and how the
shared structure is selected, and what the decomposition predicts about
capacity, training and context. The mechanistic statements are hypotheses;
the learning curves of Section~\ref{sec:bayes-evidence} are the evidence we
have for them.

\subsection{The objective, capacity-expressibility and training}
\label{app:objective}

Identity~\eqref{eq:objective} says that the population training loss is
the loss of the informed Bayes plus the expected divergence of the network's
forecast from it. Two things separate a trained network from that target.
The first is its capacity, or expressibility: the architecture must contain a function of $x^t$
that approximates $Q_B(\cdot\mid x^t)$ on the prefixes the prior generates.
The second is training: the population loss is estimated from a finite
number of sequences, and the optimizer must find the approximation. The
regret curves of Figure~\ref{fig:bayes-summary} separate the two only
partly, but both are visible: the hidden-encoding source is matched at
$1024$ tokens by eight layers and not by two, and doubling the training of
the eight-layer models still lowers their loss. In the language of
\citet{jeon2024information}, the weights learn the family and the prior
across sequences; the activations then carry, for each sequence, whatever
statistics of the prefix the posterior needs.

The required capacity question has a classical form. Universal prediction with
a finite number of states asks how much regret a $K$-state machine must pay
against the best constant or Markov predictor of an individual sequence;
\citet{meron2004finite} bound the achievable regret as a function of $K$
for deterministic and randomized machines under the 0--1, square and
self-information losses, \citet{ingber2006prediction} give non-asymptotic
constructions with deterministic machines, and \citet{dar2011finite} extend
the analysis to continuous sequences, where regret against the empirical
mean decays as a power of $K$. In each case a resource-limited predictor
pays a regret that vanishes only as the resource grows, and the rate is a
property of the class. The transformer gaps of Section~\ref{sec:bayes-evidence}
are the same object for a different resource: depth and width, and the
amount of training, in place of states. The ordering of sources by the
capacity they need (Appendix~\ref{sec:capacity}) is a first, qualitative
version of such a rate for these classes.

The reason informed Bayes is the right target, and not merely a
convenient one, is its regret guarantee. For a smooth $k$-parameter family
with a positive prior, the cumulative regret of $Q_B$ against every $\theta$
is about $\tfrac{k}{2}\log_2 t$ plus a constant that depends on the prior
mass near $\theta$ \citep{merhav1998universal}; for a finite mixture over
$m$ hypotheses it is at most $\log_2 m$ in total. A predictor that ignores
part of the structure pays the regret of a larger family, and one that
assumes a wrong family pays a loss that does not vanish with $t$
(Section~\ref{sec:application-mismatch}). The structure is therefore not
optional for a network trained on the family: the objective rewards exactly
those computations that use it.

\subsection{Two universalities: the source family and the architecture}\label{app:twolevels}

The regret guarantee just stated concerns the inner problem, prediction within one sequence, where the class is the source family and the price of not knowing $\theta$ is paid in tokens. A trained transformer faces a second, outer problem, and the framework of \citet{feder2025framework} gives it the same form. A randomly initialized architecture defines a prior $\rho$ over the predictors it can realize; training on sequences from the family is universal prediction with respect to $\rho$; and the price of reaching a target predictor $Q$ is its architecture-based complexity, $-\log_2\rho(\text{predictors within }\epsilon\text{ of }Q)$, paid in training sequences rather than in tokens. Here the target is informed Bayes itself. A network trained on $N$ sequences of length $n$ therefore carries, at test time, a residual expected loss of the order of the architecture-based complexity of $Q_B$ divided by $Nn$, on top of the inner regret $\tfrac{k}{2}\log_2 t$ that $Q_B$ pays against $\theta$. The decomposition of \citet{jeon2024information} into across-task and within-task terms is this two-level structure stated for Bayes-optimal meta-learning; the framework adds that the across-task term is determined by how much of the architecture's prior mass lies near $Q_B$.

Three consequences connect this picture to the experiments. First, the gap $\Delta(t)$ of \eqref{eq:bayes-optimality} measures the outer cost that training has not yet paid, at each inner time $t$; a network with enough training and capacity drives it to zero at every $t$, and the shared-base and hidden-encoding curves of Figure~\ref{fig:bayes-summary} show it doing so. Second, the ordering of sources by required depth is an ordering of the targets $Q_B$ by their architecture-based complexity under the transformer prior. The decomposition $Q_B=F\circ S$ of Appendix~\ref{sec:decomposition} says why: a target whose statistic $S$ is one attention head's worth of selection and counting, and whose estimator $F$ is a smooth map of a few numbers, sits in a region of predictor space that shallow transformers reach with high prior mass; a target that needs a partition inferred from a count matrix before $F$ can be applied sits where only deep compositions reach, and the prior mass there is small until the depth is available. Third, the same framework explains a transformer's preference among predictors that fit the training data equally well: it prefers those of low architecture-based complexity, so when the family leaves a choice open, as the pooled and depth posteriors do, the network's tie-breaking is the architecture's, not the data's; Appendix~\ref{sec:selection} discusses where this can and cannot be distinguished from the Bayesian Occam factor of the family.

The training-size data already in hand give a first test of the outer cost. If the architecture can represent $Q_B$, the residual gap should fall roughly as the inverse of the number of training sequences; if it cannot, the gap should stall at a floor set by capacity. Both regimes appear. For the hidden encoding, the two-layer gap at $t=1024$ falls only from $0.105$ to $0.091$ bits when training grows four-fold from $6.4$ to $25.6$ million sequences (Table~\ref{tab:hidden-factor-scaling}), a floor; the eight-layer gap falls from $0.0057$ to $0.0024$ bits when training doubles from $12.8$ to $25.6$ million, close to the inverse-proportional rate. For the shared-base source the four-layer gap at $(L,t)=(0,1024)$ changes by less than $0.0002$ bits under the same four-fold increase while the eight-layer gap is still falling (Appendix~\ref{app:shared-base-comparison}). The pattern is what the framework predicts: a target of low architecture-based complexity under the deeper prior is reached at the universal rate, and one of high complexity under the shallow prior is not reached at all. A dedicated scaling study, with several training sizes per architecture, would turn this into a measured exponent.

Two remarks on scope. The complexity spectrum of the layered simplex architecture is computable \citep{feder2026lsa}, and that is why the LSA prior serves here as the source prior: informed Bayes can be evaluated. The complexity spectrum of a transformer is not computable in the same sense, and the present paper does not attempt it; what it measures is, for six computable targets, how much depth, width and training the architecture needs to reach them, which is the empirical shadow of that spectrum. The meta-learning experiments of \citet{graumoya2024universal} are the same measurement for a different target, the context-tree-weighting mixture and, aspirationally, Solomonoff's mixture, and their finding that transformers and LSTMs reach the same loss says that both architectures place adequate prior mass near that target at the scale they used; which of the two does so with the smaller complexity, and hence with fewer sequences, is the question the framework poses and their length-generalization results begin to answer.

\subsection{Selection, accumulation, estimation}
\label{sec:decomposition}

For every source studied here the Bayesian prediction can be written as
a composition $Q_B(b\mid x^t)=F\big(S(x^t)\big)_b$, where $S$ is a set of
statistics of the prefix and $F$ maps these statistics to a distribution
over the alphabet. The simplest case, and the one in which the imitation has
been observed mechanistically, is a Markov chain whose rows are drawn
independently from the uniform simplex \citep{edelman2024evolution}. There
$S$ consists of the successor counts in the current context,
$m_a(b)=\#\{s<t:x_s=a,\,x_{s+1}=b\}$ with $a=x_t$, and $F$ is the add-one
rule $(m_a(b)+1)/(\sum_c m_a(c)+d)$; equation~\eqref{eq:lsa-depth-one-code}
gives the LSA generalization at depth one. A transformer realizes this in two
steps. An attention head whose query is the current token and whose keys are
the past tokens attends to the positions $s$ with $x_s=x_t$, and averages the
one-hot successors at those positions; the output is the empirical successor
distribution in the current context, and the number of matches is
recoverable from the attention mass on the start token. This is the
statistical induction head. The feed-forward blocks and the unembedding then
have to represent $F$. Attention thus supplies two of the three ingredients
for free---\emph{selection} of the positions that share the current
situation, and \emph{accumulation} of counts over them---and the third,
the \emph{estimator} $F$, is learned by the feed-forward network. The estimator is what the prior
fixes: the same counts lead to different predictions under different priors,
and the coin experiment of Section~\ref{sec:bayes-coins} is the case with no
selection at all, where the two-point-prior model and the uniform-prior model
must and do learn the two different maps
of~\eqref{eq:mismatch-coin-predictive}.

Three properties of this decomposition carry over to the harder
sources. Selection is content-addressed and need not be consecutive, so the
same mechanism serves a context defined by the current token, by one of its
coordinates, or by the whole prefix. Accumulation is exact and costs no
parameters. Estimation has to be learned on the count configurations that
the prior actually produces; the network is never trained on configurations
it will not see, and it is trained most on the ones that occur most, which
is why the residual gap depends on $t$ in the way described in
Appendix~\ref{sec:capacity}.

\subsection{What is different about the present sources}
\label{sec:different}

In the independent-row chain, and in the coins, regression tasks and
$n$-gram families of earlier work, the statistics $S$ are local: the
prediction in context $a$ depends on the prefix only through the counts in
row $a$, and $F$ acts row by row. The sources of
Section~\ref{sec:sources-prediction} break this locality in three distinct
ways, and each way has a distinct signature.

\paragraph{Global statistics: the shared-base source.}
Under the shared-base prior the prediction in row $a$ is, by
\eqref{eq:shared-base-predictive}, the posterior average of
$Z_L(M_a+e_b;q)/Z_L(M_a;q)$ over the base law $q$ (that is, over the exponent
$\alpha$ and the rank permutation $\sigma$) and the common depth $L$. The
row's own counts $M_a$ enter as before, but $q$ and $L$ are inferred from
all rows together: the rank assignment mainly from the total token
frequencies $N_b=\sum_a M_a(b)$, which order the labels; the exponent from
how steeply those totals fall; and the depth from how concentrated every row
looks, i.e.\ from the count profiles of the rows. So the sufficient
statistics contain a global part---the totals and the profiles---that
requires attention over \emph{all} past positions, not only the matching
ones, and the estimator has to combine the global part with the local
counts. The uniform-base control removes the global part: $q$ is known, and
the only shared quantity is the depth, which matters little for prediction.
The unseen-context test of Section~\ref{sec:application-transfer} isolates
the global part experimentally. When the current token appears for the first
time, the local counts are empty and the prediction is the posterior mean of
the base law, $\int q\,\pi(dq\mid x^t)$, which is a function of the global
statistics alone. The transformer's gain over uniform there, $0.793$ bits
against Bayes's $0.799$ at $L=1$, $t=64$, and its near-zero gain on the
uniform-base control where Bayes predicts none, is direct evidence that it
computes the global statistics and uses them where the family says they are
informative, and only there.

\paragraph{A hidden partition: the product source.}
With a known encoding, the product structure changes selection but not
estimation: the head for coordinate $u$ must attend to the positions whose
token shares the current token's $u$-coordinate, which is a fixed linear
function of the token embedding, and the two coordinate predictions are
multiplied. Two heads and one layer suffice in principle, and the two-layer
model is matched to Bayes at $1024$ tokens. With a hidden encoding the
partition of the $16$ labels into $2\times8$ coordinates is itself a latent
variable of the sequence. Bayes handles it by summing the factorized
evidence $Q_\sigma(x^n)$ over all $259\,459\,200$ equivalence classes of
encodings (Appendix~\ref{app:hidden-factor-bayes}); no network can enumerate
them, so a transformer has to infer the partition from statistics of the
prefix. Labels that share a coordinate have successor distributions that
agree on that coordinate's marginal, so the partition is readable from the
$16\times16$ count matrix, but only after the counts are formed: one layer
to accumulate per-label successor statistics, further layers to compare them
and form a soft partition, and then the factorized estimate. This is, we
suggest, why the hidden-encoding source needed eight layers and $25.6$
million sequences where the known-encoding source needed two layers and
$6.4$ million, and why the residual gap concentrates at intermediate
contexts: at $256$ tokens the gap is $0.022$--$0.025$ bits while at $1024$
it is $0.002$, and $256$ tokens is where the posterior over encodings is
neither flat nor concentrated, so that the soft sum over encodings---the
hardest part of $F$ to imitate---matters most
(Figure~\ref{fig:hidden-factor-scaling},
Table~\ref{tab:hidden-factor-scaling}).

\paragraph{Selecting the family: pooled training.}
Under pooled training the family $f$ is one more latent variable. The
Bayesian prediction is the
mixture~\eqref{eq:pooled-family-weights}--\eqref{eq:pooled-prediction} of
Appendix~\ref{app:pooled-comparison}, whose weights are a softmax of the
cumulative log-likelihoods of the three family predictors. Family selection
is therefore the same computation as parameter selection one level up: each
family's evidence is a sum over positions of that family's per-position
log-probabilities, a quantity attention can accumulate, and the weights are
a softmax of those sums. A network trained on the pooled prior has to carry
three family-specific predictions and their running evidences at once,
which requires the network to combine within-family inference with an
additional weighting step. At four observed tokens, increasing the model
from two to four layers reduces the gap to the Bayesian mixture from
$0.0197$ to $0.0052$ bits in run~1 and from $0.0190$ to $0.0025$ bits in
run~2, at the same training budget.

\subsection{Why and how the structure is selected}
\label{sec:selection}

Why a trained network uses the shared structure rather than ignoring it
follows from Section~\ref{sec:why}: under the training prior, ignoring
the structure costs regret at every context length, and the objective is the
expected loss over exactly that prior. The size of the incentive is the
difference between the regret of the family and the regret of the larger
family the network would otherwise be learning. For the shared-base source
at depth zero, the family has the $d-1$ parameters of the base law and the
tokens are independent; a predictor that treats the $64$ rows as independent
would have to learn $64\times63$ parameters from the same tokens, and its
regret would be far larger at every tested $t$. The pooled setting makes the
incentive explicit: not knowing the family costs Bayes at most $\log_2 3$
bits cumulatively (Appendix~\ref{app:application-family}), whereas
predicting with the wrong family costs $R_\infty$ bits on every token
(Section~\ref{sec:application-mismatch}). A network that did not select
would pay the second cost; one that selects as Bayes does pays the first.

How the structure is selected is the posterior over the discrete latent
variables---the family, the depth, the encoding---which is a softmax of
accumulated log-evidence, and the posterior over the continuous parameters,
which the estimator $F$ represents implicitly. Two consequences follow.
First, selection is soft and stays soft: the posterior over families does
not always concentrate, because the shared-base and uniform-base priors both
place positive mass on every positive transition matrix and remain ambiguous
with unlimited data (inequality~\eqref{eq:family-cost-limit}). A network
imitating $Q_B$ must remain equally undecided, and the pooled comparisons at
longer contexts, once the Bayesian references are complete, test exactly
this. Second, the preference for the simpler structure that transformers
display when the evidence is ambiguous \citep{deora2025occam} is, in our
setting, the prior's preference and not the architecture's: when one family
is nested in another, or a lower depth is nested in a higher one, the
evidence of the smaller family integrates over fewer parameters, so its
posterior weight is larger at short contexts and yields only as the counts
accumulate. This is the Bayesian universality or Occam factor. A network that imitates $Q_B$
inherits it, and departs from it only where it departs from $Q_B$; the depth
posterior of the shared-base source, which selects the lower depth until the
row profiles rule it out, is the place to test this directly.
Where the network does depart from $Q_B$, the architecture's own preference of Appendix~\ref{app:twolevels}---for predictors of low architecture-based complexity---is the natural candidate for the direction of the departure, and a test that separates the two universality factors would compare the network's family weights with the Bayesian ones on prefixes where the family evidence is balanced.

\subsection{What this predicts about capacity, training and context}
\label{sec:capacity}

The decomposition turns the open questions of the introduction into
concrete expectations, which the present experiments partly test and future
ones can test further.

\emph{Capacity} which allows the expressibility of the approximated target is set by the depth of the composition $S\to F$ and by
the size of $S$. Local counts need one layer of selection and one map; a
hidden partition needs layers to form statistics, compare them and select;
global statistics need width to carry $d$ totals and row profiles beside the
local counts. The sources here order themselves accordingly: coins and the
known-encoding product are already well approximated by two layers; four
layers improve both the known-encoding and uniform-base comparisons; and
eight layers bring the hidden-encoding and shared-base models close to
informed Bayes. Four layers also improve the pooled model, which must
carry three family-specific predictions.

\emph{Training} is set by how many distinct count configurations the
estimator has to be learned on. Configurations that the prior produces
rarely are learned last; for the shared-base source these are the deep,
sparse rows at long contexts, and it is at $t=1024$ that doubling the
eight-layer training still helps at every depth
(Figure~\ref{fig:shared-base-conditional}).

\emph{Context} enters through the nonlinearity of $F$. At short contexts
the posterior is close to the prior and $F$ is nearly constant; at long
contexts the posterior is concentrated and $F$ is nearly the empirical
frequency; in between, $F$ is most curved and the network's approximation is
weakest. This predicts that the residual gap peaks at intermediate $t$, and
this occurs in several comparisons: at $t=32$ for the two-layer
depth-$(1,1)$ product model
(Figure~\ref{fig:product-conditional-gaps}), at $t=128$ for the two-layer
uniform-base $L=0$ control, and at $t=256$ for the hidden encoding
(Section~\ref{sec:bayes-hidden-factor}). The Bayesian curve says where the
hard part of each source lies, and the transformer's gap sits there.

These expectations are consistent with the comparisons reported here;
they are not established by them. A direct test of the mechanism would read
the attention patterns and the count-like activations against the statistics
that $Q_B$ needs, source by source.

\section{Details for Section~\ref{sec:bayes-evidence}: how close transformers get to informed Bayes}
\label{app:bayes-evidence-details}
\label{app:context-accuracy-audit}

This appendix follows the same progression as
Section~\ref{sec:bayes-evidence} and Figure~\ref{fig:bayes-summary}.
Appendix~\ref{app:transformer-details} first gives the common experimental
and numerical protocol. The remaining subsections then treat the six
inference problems in panel order: a source prior, a known factorization,
a hidden factorization, fixed shared structure, inferred shared structure,
and an unknown source family.

\subsection{Common experimental and numerical protocol}
\subsubsection{Models, training and test sets}
\label{app:transformer-details}

All models are decoder-only transformers with learned token and positional
embeddings, pre-layer-normalized causal-attention and feed-forward blocks,
residual connections, a final layer normalization and linear output. The
feed-forward width is four times the model width, with GELU and no dropout.
Table~\ref{tab:training-settings} specifies every model used in Section~\ref{sec:bayes-evidence}.
Run~1 uses random seed~0 and run~2 uses seed~1. Within a run, all tests of a
model at a given training length use the same saved weights.

\begin{table}[tbp]
\centering\small
\setlength{\tabcolsep}{3.5pt}
\begin{tabular}{lrrrrrrr}
\toprule
Source and model & Layers & Width & Heads & Parameters & Updates & Seq.\ ($10^6$) & Runs\\
\midrule
Coins, each prior & 2 & 64 & 4 & $116736$ & $20000$ & $1.28$ & 1\\
Product, known $8\times8$ encoding & 2 & 128 & 4 & $544256$ & $100000$ & $6.4$ & 1\\
Product, known, four-layer & 4 & 256 & 8 & $3454464$ & $100000$ & $6.4$ & 2\\
Product, hidden, two-layer & 2 & 128 & 4 & $531968$ & $100000$ & $6.4$ & 2\\
Product, hidden, continued & 2 & 128 & 4 & $531968$ & $400000$ & $25.6$ & 2\\
Product, hidden, four-layer & 4 & 256 & 8 & $3429888$ & $100000$ & $6.4$ & 2\\
Product, hidden, four-layer continued & 4 & 256 & 8 & $3429888$ & $400000$ & $25.6$ & 2\\
Product, hidden, eight-layer & 8 & 512 & 8 & $25760768$ & $400000$ & $25.6$ & 2\\
Uniform-base LSA, $d=64$ & 2 & 128 & 4 & $544256$ & $100000$ & $6.4$ & 1\\
Uniform-base LSA, four-layer & 4 & 256 & 8 & $3454464$ & $100000$ & $6.4$ & 2\\
Shared-base LSA, two-layer & 2 & 128 & 4 & $544256$ & $100000$ & $6.4$ & 2\\
Shared-base LSA, four-layer & 4 & 256 & 8 & $3454464$ & $100000$ & $6.4$ & 2\\
Shared-base LSA, four-layer, shuffled & 4 & 256 & 8 & $3454464$ & $100000$ & $6.4$ & 2\\
Shared-base LSA, four-layer continued & 4 & 256 & 8 & $3454464$ & $400000$ & $25.6$ & 2\\
Shared-base LSA, eight-layer & 8 & 512 & 8 & $25809920$ & $400000$ & $25.6$ & 2\\
Pooled, three families & 2 & 128 & 4 & $544256$ & $300000$ & $19.2$ & 2\\
Pooled, four-layer & 4 & 256 & 8 & $3454464$ & $300000$ & $19.2$ & 2\\
\bottomrule
\end{tabular}
\caption{\textbf{Training settings for the reported models.}
Contexts are $256$ tokens for coins and $1024$ otherwise; sequences
contain one additional target token. Every update uses $64$ fresh
sequences. Continued models retain their checkpoint parameters and optimizer state;
other model sizes are trained from fresh initialization.}
\label{tab:training-settings}
\end{table}

Training minimizes mean next-token cross-entropy in single precision.
AdamW uses $(\beta_1,\beta_2)=(0.9,0.95)$, matrix-parameter weight decay
$0.1$ (none for biases or layer-normalization parameters), and gradient
norm clipping at $1$. The learning rate rises linearly to $3\cdot10^{-4}$
then follows a cosine schedule to $3\cdot10^{-5}$. Warmup lasts $500$
updates for coins, $3000$ for the pooled model, and $1000$ for the other
models. Continuations instead retain the final learning rate for their
additional $300000$ updates. The eight-layer models use a cosine schedule
over $400000$ updates; their final evaluations therefore change
size, amount of training and learning-rate schedule relative to the earlier
$100000$-update models. All next-token positions
enter the training loss; each sequence receives newly drawn source
parameters under its full training prior. The pooled training schedule
balances the three families, giving $6.4$ million sequences per family
and $19.2$ million in total. Training, monitoring and test data come from
separate random streams. The code repository lists the configuration
and the command for every model and every figure. In the product source with
known encoding, the generator also relabels the values of each coordinate at
random for every sequence. This does not change the distribution of the
source, because the rows are independent and exchangeable.

The four-layer known-product and uniform-base models use the same source
priors, training budgets and learning-rate schedules as their two-layer
counterparts, with independently drawn training sequences. Both larger
runs are compared with the same two-layer checkpoint on the same test
sequences. For pooled training, the two sizes additionally receive the
same indexed training sequences within each run. All comparisons use
the final checkpoint; no run is selected by its test performance.

\begin{table}[tbp]
\centering\small
\begin{tabular}{llrr}
\toprule
Source & Fixed test condition & Sequences & Length\\
\midrule
Two-point coins & $\theta=0.2$; $0.8$ & $2076$; $2020$ & $257$\\
Uniform-prior coins & Each $\theta=0.1,0.3,0.5,0.7,0.9$ & $4096$ & $257$\\
Product $8\times8$ & Each $(1,1),(1,8),(4,4),(8,8)$ & $256$ & $1025$\\
Hidden encoding & Full prior & $3840$ & $1025$\\
Uniform-base LSA & $L=0,1,4,8$, respectively & $112,101,115,119$ & $1025$\\
Shared-base LSA & $\alpha=1$; each $L=0,1,4,8$ & $256$ & $1025$\\
Pooled families & Each family's full prior & $1024$ & $1025$\\
\bottomrule
\end{tabular}
\caption{\textbf{Test sets for the main comparisons.}
The parameters that define a test setting are never shown to the predictors.
The uniform-base groups are parts of one full-prior test set of $1024$
sequences. The hidden-encoding test set leaves out the $256$ sequences that
we used while developing the experiment.}
\label{tab:conditional-cohorts}
\end{table}

\subsubsection{Scoring and uncertainty}
\label{app:prediction-metrics}
A sequence of length $N+1$ supplies $N$ scored predictions: after
observing $X^t$, the predictor scores $X_{t+1}$, for $t=1,\ldots,N$.
The first token informs inference but is not scored. Conditional expected
next-token loss is
\begin{equation}
 h_t(Q;\theta)=-\sum_b P_\theta(b\mid X^t)\log_2 Q(b\mid X^t).
 \label{eq:causal-expected-loss}
\end{equation}
We suppress $\theta$ when the realized source is understood. For a
Markov source the scoring law is its current true transition row.
Subtracting that row's entropy and averaging gives
\eqref{eq:instantaneous-regret}. Source parameters are used only for
scoring and grouping, never supplied to the predictors. Histories follow
the specified initial law; stationarity is not assumed.

With the first token supplied, cumulative loss and regret through token
$n$ are
\begin{equation}
 \mathcal L_M(x^n)=-\sum_{t=1}^{n-1}\log_2 Q_M(x_{t+1}\mid x^t),
 \qquad R_M(n)=\sum_{t=1}^{n-1}r_M(t).
 \label{eq:cumulative-regret}
\end{equation}
In particular, $\mathcal L_B(x^n)=-\log_2[Q_B(x^n)/Q_B(x_1)]$.
For independent sequences $j=1,\ldots,N$, uncertainty in a statistic
$Y_j$ uses
\[
 \overline Y=N^{-1}\sum_jY_j,\qquad
 \operatorname{SE}(\overline Y)=
 \left[\frac{\sum_j(Y_j-\overline Y)^2}{N(N-1)}\right]^{1/2}.
\]
Model differences and cumulative sums are formed within each sequence
before aggregation. Tokens are not independent replicates; training-run
variation is reported separately. Unless stated otherwise, numerical
values give one standard error across sequences, not numerical
integration error or simultaneous coverage across times.

Averaging over the full training prior $\pi$ gives the population identity
\[
 r_M(t)-r_B(t)=\mathbb E_\pi D_{\mathrm{KL},2}
 \bigl(Q_B(\cdot\mid X^t)\Vert Q_M(\cdot\mid X^t)\bigr)\ge0.
\]
An estimated gap can nevertheless be negative through sampling error.
If evaluation fixes parameters or selects a different source distribution
while Bayes retains its training prior, the identity and ordering need
not hold even in the population.

\subsubsection{Bayesian evaluation and numerical precision}
\label{app:reference-numerics}

\paragraph{Rows, coins and known-encoding chains.}
We tabulate the transforms in
\eqref{eq:source-weight-transform-recursion} once and reuse them. The row
integral is computed on a grid in $u=\ln s$. Products are evaluated as sums
of logarithms, and a base weight $w_b$ enters as the shift $u+\ln w_b$. For
sources with a common depth, the row probabilities are multiplied first, and
the average over the depth is taken afterwards. Independent coordinates are
then combined. We check the results in four ways: against the closed forms
at depth zero and depth one, against the product factorization, by comparing
sequential with batch evaluation, and by refining the grid. In $198$ row
checks with up to $1024$ observed tokens, a wider integration range and half
the grid spacing changed the log probabilities by less than
$9\cdot10^{-12}$ bits, and the closed-form cases agreed within
$6\cdot10^{-11}$ bits. These checks cover the tested count vectors, not
every possible history. For coins, closed-form count formulas and finite
binomial sums check the Bayesian predictions and regrets. A separate script
recomputes every reported number from the saved predictions. It checks the
identity of source and model, that every prediction uses only earlier
tokens, that the probabilities sum to one, and the averages and paired
standard errors. We never rescale probabilities to make results agree.

\paragraph{Shared-base inference.}
At $t=1,4$, we sum over the ranks of observed labels and integrate over
exponent and shared depth. There are $15$ repetition patterns at four
tokens. Small-alphabet enumeration checks this calculation; refinement
changes the reported regrets by less than $2\cdot10^{-14}$ bits.
At $t=16,64$, posterior sampling covers the unknown rank assignment
and exponent while integrating over the common depth and rows. Every
sequence has two independently seeded calculations at $16$ and $32$
exponent nodes, each with four chains and $16384$ warmup moves.
Retained draws are extended from $4096$ to as many as $65536$ per chain
at $t=16$, and $262144$ at $t=64$. A separate script checks chain
agreement, effective sample sizes, predictive moments and refinement
of both integration rules, requiring split $\widehat R<1.05$ for all
monitored coordinates, probability effective sample sizes at least $200$,
maximum probability Monte Carlo standard error at most $0.005$, and
raw conditional and averaged predictive normalization errors at most $10^{-7}$.

Numerical error is assessed separately from source-to-source variation.
For a test set of $N$ sequences, the numerical variance of its mean is
the sum of individual numerical variances divided by $N^2$. The
numerical error bound combines twice the numerical standard error and twice the
estimated bias from taking a logarithm of a sampled probability. It
must be below $0.01/3$ bits and below one quarter of the sequence
standard error after subtracting estimated numerical variance.
All $256$ sequences enter every reported depth-specific mean.
The largest accepted numerical error bounds are $0.0023$ bits at $t=16$ and
$0.0017$ at $t=64$; the largest time-$64$ change between integration
orders is $0.00050$ bits. All $2048$ individual time-$64$ references
(two orders for each of four test sets of $256$ sequences) pass the checks.

\subsection{A source prior: coins}
\label{app:coins-comparison}

The other coin comparisons confirm that the transformer learns the training
prior. Figure~\ref{fig:bayes-summary} shows $t=1,2,4,\ldots,256$. For the
two-point prior, the test sequences at $\theta=0.2$ and at $\theta=0.8$ are
the two parts of one full-prior test set. For the uniform prior we drew new
sequences at each displayed bias. Closed-form binomial sums confirm the
means of informed Bayes.

For $4096$ independent full-prior sequences per coin model, cumulative
gap over $256$ predictions is $0.0017\pm0.0013$ bits for the
two-point prior and $-0.0002\pm0.0014$ for the uniform prior.
The sequence-averaged predictive divergences are respectively
$(9.47\pm0.20)\cdot10^{-6}$ and $(1.20\pm0.04)\cdot10^{-5}$ bits.
We reconstruct binary probabilities from the saved probability of the
observed token, verify them against full vectors and replay fixed sequences
against the checkpoints, avoiding lower-precision vectors when estimating
these small divergences.

\subsection{A known encoding: product source}
\label{app:known-product-comparison}

Four layers bring the transformer closer to informed Bayes at the same
training budget. Across the four depth pairs, the gaps after $1024$
observed tokens are $0.0012$--$0.0031$ bits in run~1 and
$0.0026$--$0.0056$ bits in run~2, compared with $0.004$--$0.009$ bits
for the two-layer model. The largest gaps across the evaluated contexts
are $0.044\pm0.005$ and $0.049\pm0.006$ bits for the four-layer runs,
both at $(L_1,L_2)=(1,1)$ after $16$ tokens; the two-layer maximum is
$0.124\pm0.009$ bits after $32$ tokens. The comparison uses the same
$256$ sequences at each depth pair and $t=1,2,4,\ldots,1024$.
Figure~\ref{fig:known-structure-scaling} shows both larger runs.

\begin{figure}[tbp]
\centering
\includegraphics[width=\linewidth]{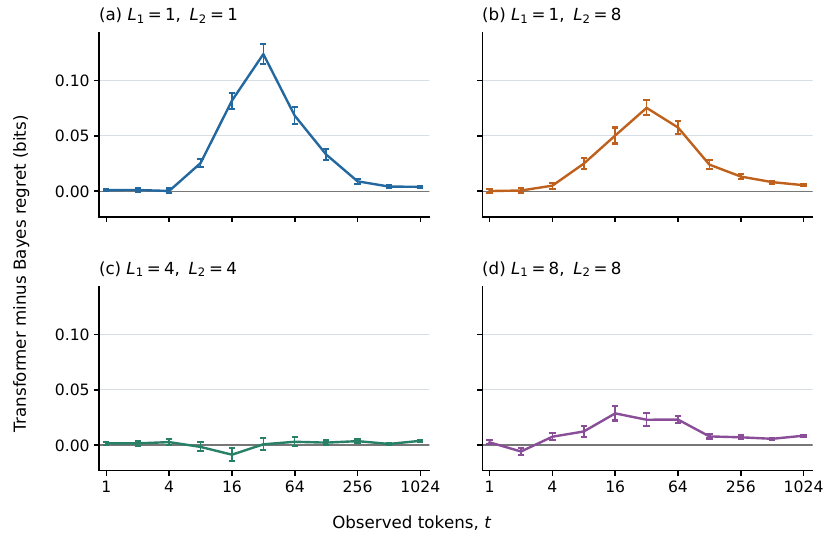}
\caption{\textbf{A known factorization: the two-layer gap to informed Bayes.}
Each panel uses $256$ independent sequences at its fixed coordinate-depth
pair, matching panel (b) of Figure~\ref{fig:bayes-summary}. Positive values
mean larger expected next-token loss for the transformer. Bars give one
standard error of the paired differences across sequences. The common
vertical scale makes the different transient gaps visible.}
\label{fig:product-conditional-gaps}
\end{figure}

The full-distribution comparison also measures the two-layer model's
agreement. Figure~\ref{fig:bayes-evidence}(a) reports
$D_t=D_{\mathrm{KL},2}(Q_B(\cdot\mid X^t)\Vert
Q_{\rm TF}(\cdot\mid X^t))$ at every evaluated context. On $128$
independent full-prior sequences, its accumulated transient contributes
a cumulative gap of $7.23\pm0.50$ bits over $1024$ predictions.

\subsubsection{Predicting the required context}
\label{app:application-context}

For target regret $\epsilon$ and tested times $\mathcal T$, define
\begin{equation}
 n_M(\epsilon;\mathcal T)=\min\{t\in\mathcal T:r_M(t)\le\epsilon\},
 \label{eq:required-context}
\end{equation}
with nonattainment if the set is empty. Close curves can have different
crossings near a flat region. We therefore fixed targets
$\epsilon=0.1,0.2,0.5$ bits and a refined grid before independent
confirmation. Starting with $\{1,2,4,\ldots,1024\}$, we added
$\lceil n_B2^{j/4}\rceil$, $j=-4,\ldots,4$, around each development
Bayesian crossing $n_B$, clipped to $[1,1024]$. The resulting grid has
$26$ times. At each of four depth pairs, $1024$ fresh sequences of length
$1025$ were drawn; all Bayesian forecasts were saved before evaluating
the trained transformer. Neither predictor receives the source depths.
No interpolation, monotonic smoothing or deletion of unreached outcomes
is used.

\begin{table}[tbp]
\centering\small
\begin{tabular}{c|cc|cc|cc}
\toprule
& \multicolumn{2}{c|}{$\epsilon=0.1$}
& \multicolumn{2}{c|}{$\epsilon=0.2$}
& \multicolumn{2}{c}{$\epsilon=0.5$}\\
Depths & Inf.\ Bayes & Transformer & Inf.\ Bayes & Transformer & Inf.\ Bayes & Transformer\\
\midrule
$(1,1)$ & 725 & 862 & 363 & 363 & 91 & 108\\
$(1,8)$ & 609 & 725 & 256 & 305 & 77 & 91\\
$(4,4)$ & 725 & 725 & 305 & 305 & 91 & 91\\
$(8,8)$ & 512 & 512 & 216 & 216 & 64 & 77\\
\bottomrule
\end{tabular}
\caption{\textbf{First tested context reaching each regret target.}
Six pairs coincide; six differ by one grid step, a ratio of about
$1.18$--$1.20$. Simultaneous uncertainty sets overlap for every pair;
the two predictors therefore require the same context within the resolution
of this experiment.}
\label{tab:context-requirements}
\end{table}

For uncertainty, $4096$ whole-sequence bootstrap resamples retain all
times and both predictors together within each depth condition, while
conditions are resampled independently. The maximum standardized mean
deviation gives a simultaneous $95\%$ band with multiplier $3.60$.
A crossing time is compatible if its lower bound reaches the target
and no earlier upper bound forces a crossing; nonattainment is retained
if every upper bound remains above the target. Both sets are finite
and overlap in all twelve comparisons. This concerns one fixed model.
Exact depth-one checks, wider integration, causal token-index tests
and independent aggregation of all $4096$ sequences pass.

\subsection{Discovering a factorization: hidden encoding}
\label{app:hidden-factor-experiment}

The $2\times8$ source makes the factorization latent while retaining an
exact informed Bayes. Every draw has fresh transition matrices
and a fresh encoding. At $t=1,4,16,64,256,1024$, each transformer and informed
Bayes receive the same tokens; informed Bayes integrates over the unknown
encoding. An oracle that is told the encoding provides a separate measure of
the cost of discovering the factorization.

Scaling brings the transformer to the informed Bayesian curve. At $t=1024$,
the two-layer models have regrets $0.1430$ and $0.1335$ bits in the
two runs, compared with $0.03781$ bits for informed Bayes. The eight-layer
models trained on $25.6$ million sequences reach $0.0402$ and $0.0397$ bits,
leaving gaps of $0.00240\pm0.00016$ and $0.00184\pm0.00010$ bits.
Doubling their training from $12.8$ to $25.6$ million sequences reduces the
final gap by a further $0.0033$ bits in each run.

The intermediate models show the same progression. Increasing the two-layer
models' training from $6.4$ to $25.6$ million sequences reduces their endpoint
regret by $0.0147\pm0.0005$ and $0.0136\pm0.0005$ bits. At $6.4$ million
sequences, moving from two to four layers reduces it by
$0.0339\pm0.0007$ and $0.0583\pm0.0010$ bits. The final four-layer regrets
are $0.1067$ and $0.0563$ bits, establishing the contribution of size
before the eight-layer models nearly close the remaining gap.

The oracle comparison separates this scaling result from the intrinsic cost
of the hidden encoding. At $t=1024$, regret is $0.03781$ bits when the encoding
is hidden and $0.03727$ when it is revealed; integrating this difference over
$1024$ predictions gives $27.50\pm0.03$ bits. The two-layer transformers have
additional cumulative gaps of $108.38\pm0.63$ and $105.35\pm0.62$ bits,
which size and training reduce to $9.57\pm0.10$ and $8.49\pm0.09$ bits.

All reported comparisons use the same $3840$ evaluation sequences and exclude
the $256$ development sequences. A separate script checks the test set,
the checkpoint identities, the informed-Bayes values, the model scores and
the paired statistics. Table~\ref{tab:hidden-factor-scaling} gives the endpoint values;
Figure~\ref{fig:hidden-factor-scaling} gives both runs at every context.

\begin{table}[tbp]
\centering
\small
\setlength{\tabcolsep}{4pt}
\begin{tabular}{lcrrr}
\toprule
Predictor & Run & Next-token regret & Gap to informed Bayes & Cumulative gap\\
\midrule
Informed Bayes & -- & $0.0378\pm0.0004$ & $0$ & $0$\\
2 layers, $6.4$M & 1 & $0.1430\pm0.0014$ & $0.1052\pm0.0013$ & $108.38\pm0.63$\\
2 layers, $6.4$M & 2 & $0.1335\pm0.0013$ & $0.0957\pm0.0012$ & $105.35\pm0.62$\\
2 layers, $25.6$M & 1 & $0.1283\pm0.0013$ & $0.0905\pm0.0012$ & $99.81\pm0.59$\\
2 layers, $25.6$M & 2 & $0.1199\pm0.0012$ & $0.0821\pm0.0011$ & $98.63\pm0.58$\\
4 layers, $6.4$M & 1 & $0.1091\pm0.0011$ & $0.0713\pm0.0010$ & $94.07\pm0.58$\\
4 layers, $6.4$M & 2 & $0.0751\pm0.0008$ & $0.0373\pm0.0007$ & $55.99\pm0.41$\\
4 layers, $25.6$M & 1 & $0.1067\pm0.0011$ & $0.0689\pm0.0009$ & $88.83\pm0.53$\\
4 layers, $25.6$M & 2 & $0.0563\pm0.0007$ & $0.0185\pm0.0005$ & $37.19\pm0.28$\\
8 layers, $25.6$M & 1 & $0.0402\pm0.0005$ & $0.0024\pm0.0002$ & $9.57\pm0.10$\\
8 layers, $25.6$M & 2 & $0.0397\pm0.0005$ & $0.0018\pm0.0001$ & $8.49\pm0.09$\\
\bottomrule
\end{tabular}
\caption{\textbf{Discovering a factorization: scaling toward informed Bayes.}
All quantities are in bits, averaged over the same $3840$ test
sequences of length $1025$, excluding the $256$ development sequences.
Next-token regret scores prediction of token $1025$; the gap subtracts the
informed Bayesian regret.
The cumulative gap sums the loss difference over predictions of tokens
$2$ through $1025$. Uncertainties are one standard error across
sequences, using paired differences for both gap columns.}
\label{tab:hidden-factor-scaling}
\end{table}

\begin{figure}[tbp]
\centering
\includegraphics[width=\linewidth]{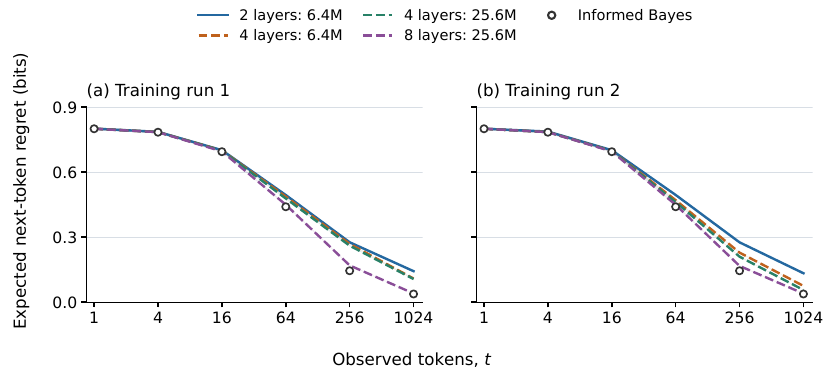}
\caption{\textbf{Discovering a factorization: size and training.}
Left: training run $1$; right: training run $2$. Each panel shows the
two-layer transformer as a solid curve and the larger transformers
as dashed curves; colors distinguish their sizes and training
lengths. Legend numbers give training sequences in millions. Unconnected open
circles show informed Bayes. All comparisons use
the same $3840$ test sequences and axes. Shading and bars give one standard
error across sequences.}
\label{fig:hidden-factor-scaling}
\end{figure}

\subsection{A known base law: uniform-base source}
\label{app:uniform-base-comparison}

With a known, uniform base law, this source tests whether the transformer
learns the prior on the rows. Increasing the model from two to four layers
at the same training budget improves agreement at all four displayed
depths (Figure~\ref{fig:known-structure-scaling}). At $t=1024$, the
four-layer gaps range from $0.00007$ to $0.02762$ bits in run~1 and
$0.00003$ to $0.01460$ bits in run~2. In particular, both models nearly
close the two-layer model's $0.030$-bit endpoint gap at $L=0$.
Their largest gaps over the evaluated contexts are $0.037\pm0.016$ and
$0.041\pm0.017$ bits, both at $L=8,t=64$.

\begin{figure}[tbp]
\centering
\includegraphics[width=\linewidth]{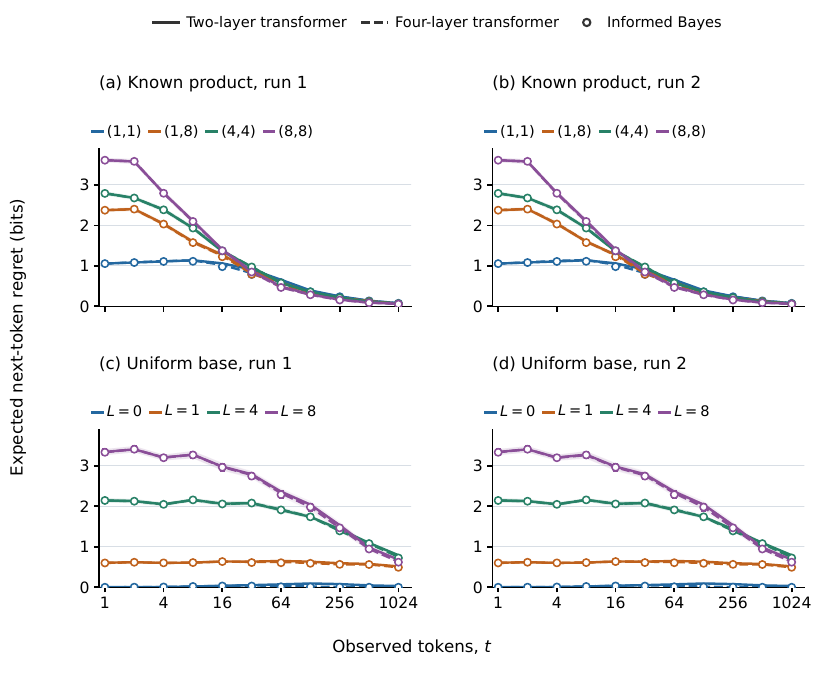}
\caption{\textbf{Known structure: increasing model size at fixed training budget.}
Rows show the known $8\times8$ product encoding and the uniform-base
source; columns show the two independent four-layer runs. Solid curves
show the same two-layer baseline in both columns, dashed curves the
four-layer model, and open circles informed Bayes. All models receive
$6.4$ million training sequences from the corresponding prior and use
the fixed test sets in Table~\ref{tab:conditional-cohorts}. Colors identify
the same source settings as Figure~\ref{fig:bayes-summary}; shading and
bars give one standard error across test sequences.}
\label{fig:known-structure-scaling}
\end{figure}

Figure~\ref{fig:bayes-evidence}(b) shows the two-layer model's full-predictive
divergence. On $128$ independent full-prior sequences, the transient gives a
cumulative gap of $31.68\pm1.57$ bits over $1024$ predictions.
Together, the two panels compare the two-layer models in the known-product and
uniform-base experiments.

\begin{figure}[tbp]
\centering
\includegraphics[width=\linewidth]{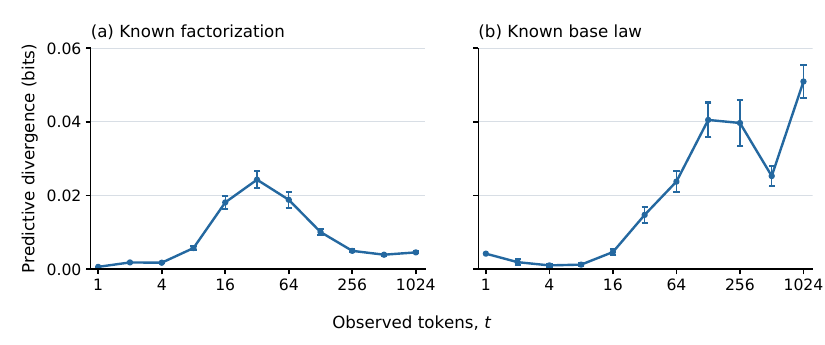}
\caption{\textbf{Informed-Bayes-to-transformer predictive divergence.}
Full-distribution comparison of the two-layer models for (a) the known $8\times8$
product factorization and (b) the uniform-base LSA source at $d=64$.
The horizontal axis counts observed tokens. Points show the mean of
$D_{\mathrm{KL},2}(Q_B\Vert Q_{\rm TF})$, in bits, over $128$ independent
sequences of length $1025$ from each training prior; bars give one standard
error. Lines join the evaluated contexts. Both panels use the same vertical
scale.}
\label{fig:bayes-evidence}
\end{figure}

\subsection{An unknown shared base law: shared-base source}
\label{app:shared-base-comparison}

More layers and more training brings both runs close to informed Bayes.
After $64$ tokens, the eight-layer models' gap ranges from
$-0.0017$ to $0.0066$ bits across all four depths and both runs. The
two-layer models have gaps as large as $0.220$ bits, while the four-layer
models trained on the same $6.4$ million sequences reduce the largest gap
to $0.040$ bits. Table~\ref{tab:shared-base-scaling} gives the paired
comparison at every depth.

Training the eight-layer models for $25.6$ rather than $12.8$ million
sequences reduces their regret after $1024$ tokens at every depth and
in both runs, by $0.0045$--$0.0096$ bits. At $(L,t)=(0,1024)$, the final
eight-layer gaps are $0.00409\pm0.00017$ and $0.00246\pm0.00014$ bits,
compared with $0.00899\pm0.00033$ and $0.00846\pm0.00030$ for the four-layer
models at $6.4$ million sequences. Figure~\ref{fig:shared-base-conditional}
shows the complete scaling sequence. All models are evaluated on the same
$256$ sequences per depth, with checkpoints fixed before evaluation.

\begin{table}[tbp]
\centering\small
\setlength{\tabcolsep}{3pt}
\begin{tabular}{ccrrrr}
\toprule
$L$ & Run & 2 layers, $6.4$M & 4 layers, $6.4$M & 4 layers, $25.6$M & 8 layers, $25.6$M\\
\midrule
0 & 1 & $0.2167\pm0.0088$ & $0.0395\pm0.0038$ & $0.0281\pm0.0030$ & $0.0065\pm0.0010$\\
0 & 2 & $0.2199\pm0.0090$ & $0.0287\pm0.0032$ & $0.0263\pm0.0029$ & $0.0034\pm0.0008$\\
1 & 1 & $0.1149\pm0.0099$ & $0.0169\pm0.0043$ & $0.0144\pm0.0039$ & $0.0066\pm0.0015$\\
1 & 2 & $0.1213\pm0.0093$ & $0.0103\pm0.0039$ & $0.0079\pm0.0035$ & $0.0005\pm0.0013$\\
4 & 1 & $0.0122\pm0.0116$ & $0.0146\pm0.0057$ & $0.0103\pm0.0052$ & $0.0026\pm0.0026$\\
4 & 2 & $0.0227\pm0.0118$ & $0.0141\pm0.0055$ & $0.0129\pm0.0055$ & $0.0037\pm0.0021$\\
8 & 1 & $0.0892\pm0.0128$ & $0.0049\pm0.0048$ & $0.0056\pm0.0043$ & $-0.0017\pm0.0020$\\
8 & 2 & $0.0822\pm0.0130$ & $0.0103\pm0.0043$ & $0.0129\pm0.0049$ & $0.0028\pm0.0021$\\
\bottomrule
\end{tabular}
\caption{\textbf{Inferring shared structure: the gap to informed Bayes after
$64$ tokens.}
Next-token loss differences in bits at $\alpha=1$, averaged over $256$
paired sequences per depth. Uncertainties are one standard error across
sequences.}
\label{tab:shared-base-scaling}
\end{table}

\begin{figure}[tbp]
\centering
\includegraphics[width=\linewidth]{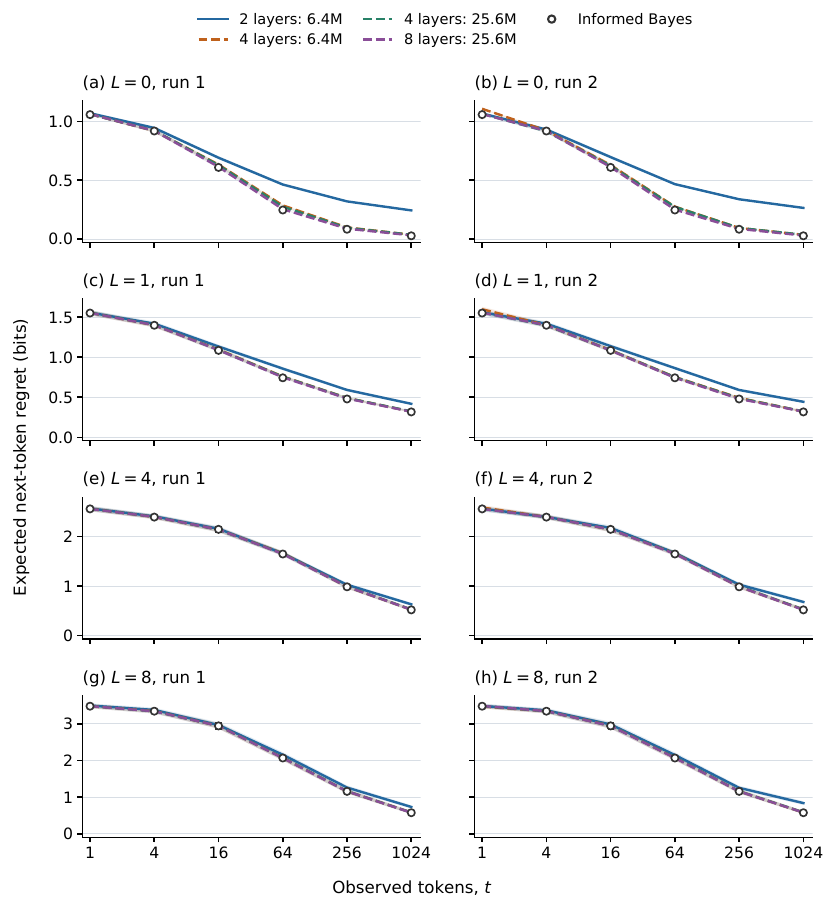}
\caption{\textbf{Inferring shared structure: size and training.}
Rows show source depths $L=0,1,4,8$; columns show the two training runs.
Each panel shows the two-layer transformer as a solid curve and the
larger transformers as dashed curves; colors distinguish their sizes
and training lengths. Legend numbers give training sequences in millions.
Unconnected open circles show informed Bayes at all six displayed contexts.
Each depth uses the same $256$ sequences for every predictor, and the
two runs share a vertical scale. Shading and bars give one standard error.}
\label{fig:shared-base-conditional}
\end{figure}

The long-sequence calculations in Appendix~\ref{app:shared-base-long-context}
apply the same independent-chain and integration-order checks at $256$ and
$1024$ tokens. All eight depth-and-context combinations use complete
test sets of $256$ sequences.
Their numerical error bounds are at most $0.000605$ bits and are reported
separately from source variation in Table~\ref{tab:shared-base-long-context}.

\subsubsection{Agreement at large $t$}
\label{app:shared-base-long-context}

Table~\ref{tab:shared-base-long-context} gives the comparisons at
all four depths after $256$ and $1024$ tokens,
on $256$ sequences each. Every sequence is evaluated twice with different
random seeds, using $16$ and $32$ integration nodes, or $32$ and $64$
nodes for $(L,t)=(0,1024)$. We report a mean only if every sequence and the mean
itself pass the numerical checks of Appendix~\ref{app:reference-numerics}.
Sequences from an earlier trial with eight sequences are included in the
$256$ and are not counted twice.

The numerical Monte Carlo uncertainty is distinct from variation between
sources. The table reports one source standard error for regret and paired
differences, and the separate numerical error bound. Informed Bayesian
optimality is prior-averaged, so fixed-setting differences can have either
sign.

\begin{table}[tbp]
\centering\small
\setlength{\tabcolsep}{3pt}
\begin{tabular}{crrrrr}
\toprule
$L$ & $t$ & Informed Bayesian regret & Run 1 gap & Run 2 gap & Numerical error bound\\
\midrule
0 & $256$ & $0.08323\pm0.00123$ & $0.00470\pm0.00051$ & $0.00303\pm0.00034$ & $0.000108$\\
0 & $1024$ & $0.02739\pm0.00040$ & $0.00409\pm0.00017$ & $0.00246\pm0.00014$ & $0.000033$\\
1 & $256$ & $0.47945\pm0.01265$ & $0.00508\pm0.00137$ & $0.00130\pm0.00114$ & $0.000443$\\
1 & $1024$ & $0.31776\pm0.01116$ & $0.00240\pm0.00089$ & $0.00315\pm0.00070$ & $0.000147$\\
4 & $256$ & $0.98501\pm0.03914$ & $0.00318\pm0.00151$ & $0.00263\pm0.00133$ & $0.000551$\\
4 & $1024$ & $0.52011\pm0.02917$ & $0.00327\pm0.00097$ & $0.00141\pm0.00104$ & $0.000242$\\
8 & $256$ & $1.16128\pm0.05668$ & $0.00078\pm0.00179$ & $-0.00103\pm0.00163$ & $0.000605$\\
8 & $1024$ & $0.57885\pm0.03987$ & $0.00158\pm0.00129$ & $0.00423\pm0.00116$ & $0.000359$\\
\bottomrule
\end{tabular}
\caption{\textbf{Inferring shared structure at large $t$.}
All quantities are in bits and use $256$ paired sequences per row.
The transformer gaps use the eight-layer $400000$-update checkpoints.
The last column is the numerical error bound of the Bayesian value, not a
source standard error; its uncertainty is shared by the two model
comparisons in that row.}
\label{tab:shared-base-long-context}
\end{table}

At
$L=0,t=1024$, four-layer models after $100000$ updates have gaps
$0.00899\pm0.00033$ and $0.00846\pm0.00030$ bits. Extending training
to $400000$ gives $0.00886\pm0.00032$ and $0.00843\pm0.00029$.
The final eight-layer models reduce these gaps to $0.00409\pm0.00017$
and $0.00246\pm0.00014$ bits. This comparison uses both runs and the models
at the end of training. No model was selected by its test performance.

\subsection{Inferring the source family: pooled training}
\label{app:pooled-comparison}

The pooled transformers closely reproduce the informed Bayesian mixture.
After four tokens, increasing from two to four layers reduces the gap from
$0.0197$ to $0.0052$ bits in run~1 and from $0.0190$ to $0.0025$ bits
in run~2. After one token, the four-layer gaps are $0.0080$ and $0.0067$
bits. Thus the same agreement holds when prediction
requires both inference within each source family and inference over the
family itself.

Each model is trained without family labels on equal proportions of product,
shared-base and uniform-base sources, receiving $6.4$ million sequences per
family. For the family probability $Q_f(x^t)$, integrated over its prior, informed Bayes uses
\begin{align}
 w_f(t)&=\frac{Q_f(x^t)}{\sum_{g=1}^3 Q_g(x^t)}
       =\frac{2^{-\mathcal L_f(x^t)}}{\sum_{g=1}^3 2^{-\mathcal L_g(x^t)}},
       \label{eq:pooled-family-weights}\\
 Q_{\rm mix}(b\mid x^t)&=\sum_{f=1}^3w_f(t)Q_f(b\mid x^t).
       \label{eq:pooled-prediction}
\end{align}
All three initial-token probabilities are $1/64$, and every component
probability and prediction concern the same observed history.

The test set has $1024$ independent sequences from each family, drawn from its full prior.
We average within families and then weight families equally; the variance of
this average is the sum of family-mean variances divided by nine. Both runs
use the same test set. Figure~\ref{fig:pooled-training-runs} compares both
model sizes with informed Bayes through $t=1024$.
Bayesian estimates at $t=16,64,256,1024$ use a fixed subset
of $64$ histories per family. At $t=1024$, the four-layer models have regrets
$0.3787$ and $0.3751$ bits, improving on the two-layer models by
$0.0360\pm0.0013$ and $0.0357\pm0.0013$ bits.
Table~\ref{tab:pooled-bayes-loss} gives the two-layer models'
cumulative comparison through four predictions. Conditional family
differences may have either sign because Bayesian optimality applies to the
complete mixture prior.

\begin{figure}[tbp]
\centering
\includegraphics[width=\linewidth]{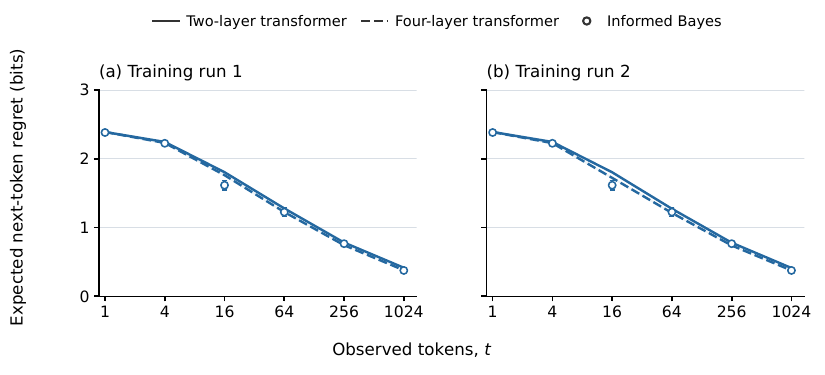}
\caption{\textbf{Inferring the source family: two independent runs.}
Each panel compares a two-layer transformer (solid curve) and a four-layer
transformer (dashed curve), trained on the same $19.2$ million indexed
sequences from the three $64$-symbol families. Open circles show the
informed Bayesian mixture through $t=1024$.
Transformer curves and Bayes at $t=1,4$ use $1024$ histories per family;
Bayes at longer contexts uses $64$ per family. Both runs share their test
histories. Shading and bars give one standard error across histories.}
\label{fig:pooled-training-runs}
\end{figure}

\begin{table}[tbp]
\centering
\caption{\textbf{Two-layer cumulative gap to the informed Bayesian
mixture.} Loss differences over four predictions, in bits. Each family has
$1024$ independent test
sequences; the final row averages the three family means equally.
Values are means $\pm$ one paired standard error.}
\label{tab:pooled-bayes-loss}
\begin{tabular}{lrr}
\toprule
Generating family & Training run $1$ & Training run $2$ \\
\midrule
Product & $0.023\pm0.011$ & $-0.018\pm0.011$ \\
Shared-base LSA & $0.105\pm0.014$ & $0.131\pm0.015$ \\
Uniform-base control & $-0.027\pm0.009$ & $-0.005\pm0.009$ \\
\midrule
Equal-family average & $0.034\pm0.007$ & $0.036\pm0.007$ \\
\bottomrule
\end{tabular}
\end{table}

\paragraph{Reference validation.}
\label{app:pooled-predictive-validation}
Product and uniform-base probabilities use the checked row integrals and
common-depth averaging. At $t=1,4$, shared-base probabilities use exhaustive short-pattern
integration over exponent, depth and permutation. Its raw normalization
error is below $1.5\cdot10^{-11}$, and an independent initial-prediction
check agrees within $5.5\cdot10^{-13}$. Sequential mixture predictions
and direct $Q_{\rm mix}(x^{t+1})/Q_{\rm mix}(x_1)$ agree within
$7.1\cdot10^{-12}$ bits on $24$ sequences through four predictions.
For longer histories, annealed sequential Monte Carlo estimates the
shared-base sequence and successor probabilities jointly. Their ratio
gives the predictive probabilities used in the mixture.
The estimate at $t=1024$ uses eight repetitions per history.
At $t=16$, each of the $192$ histories uses $32$ independent repetitions
with $8192$ particles. Separate particle, exponent and row-integration
refinements give a total numerical allowance of $0.00205$ bits for the
pooled mean, below the $0.01/3$-bit budget. This numerical allowance is
distinct from the standard error across test histories.

\subsubsection{The cost of family uncertainty}
\label{app:application-family}

Following Bayesian model-averaging comparisons
\citep{zhang2025bayesian,bai2023transformers,panwar2024bayesian,deora2025occam},
let $\mathcal L_F(n)$ score tokens $2,\ldots,n+1$ with the generating
family known but its parameters unknown. Define
\begin{equation}
 C_B(n)=\mathbb E[\mathcal L_{\rm mix}(n)-\mathcal L_F(n)],\qquad
 C_{\rm TF}(n)=\mathbb E[\mathcal L_{\rm TF}(n)-\mathcal L_F(n)].
 \label{eq:application-family-cost}
\end{equation}
Every family assigns $X_1$ the same probability, so
\[
 C_B(n)=I(F;X_2,\ldots,X_{n+1}\mid X_1)
       =\log_2 3-H(F\mid X_1,\ldots,X_{n+1}).
\]
Thus $\log_2 3$ bounds the cost; $C_{\rm TF}-C_B$ is the cumulative
transformer--mixture gap in Table~\ref{tab:pooled-bayes-loss}.

For $n\le4$, validated component likelihoods allow a population sum.
Separate coordinate relabeling represents each history by equality
patterns $u,v$, with multiplicity $M(u,v)=(8)_{k_u}(8)_{k_v}$;
there are at most $52^2=2704$ pairs for five tokens. For representatives
$x(u,v)$,
\[
 C_B(n)=\frac13\sum_f\sum_{u,v}M(u,v)Q_f(x(u,v))
       \log_2\frac{Q_f(x(u,v))}{Q_{\rm mix}(x(u,v))}.
\]
Total-mass checks pass without renormalization. For $n=1,2,3,4$ the
costs are $0.01163,0.04415,0.10227,0.17562$ bits, without test-sample
uncertainty. Adding the paired transformer--mixture sample mean gives
$C_{\rm TF}(4)=0.2093\pm0.0066$ and $0.2116\pm0.0069$ bits for
runs~1 and~2. This paired estimate has variance equal
to the sum of within-family paired variances divided by $9\cdot1024$;
numerical error is assessed separately. Informed Bayes explains most of the
early cost.

\paragraph{Residual family ambiguity.}
The limiting cost obeys
\begin{equation}
 \log_2 3-\frac{16}{27}
 \;\leq\;\lim_{n\to\infty}C_B(n)
 \;<\;\log_2 3.
 \label{eq:family-cost-limit}
\end{equation}
An infinite trajectory identifies its positive transition matrix $T$
through empirical transition frequencies. Conditional on $(T,X_1)$,
future tokens contain no further information about $F$, so the limit
is $I(F;T\mid X_1)$. The product prior lies on the lower-dimensional
set of product matrices; the depth-zero shared-base and uniform-base
priors give identical nonuniform Zipf rows and the uniform matrix,
respectively. These components are mutually singular, so $F$ is
identified almost surely there.

The positive-depth shared-base and uniform-base priors instead have
positive densities over all positive transition matrices. At depth one,
uniform-simplex rows multiplied by positive base weights and normalized
are an invertible smooth image of the simplex interior; conditioning
on the other factors extends absolute continuity to larger depths.
Thus both families remain possible on this continuous component,
whose prior mass is $2(1/3)(8/9)=16/27$. At most one bit of family
entropy remains there and strictly positive entropy does remain.
Hence $0<H(F\mid T,X_1)\le16/27$, proving the bound. Even unlimited
data need not identify the generating family.

Across all six source constructions, the detailed comparisons support the
same progression as Section~\ref{sec:bayes-evidence}. The two-layer transformer
closely tracks informed Bayes when the relevant structure is directly
accessible; more layers and more training closes the gap when the encoding
or shared structure must be inferred; and pooled training extends the same
correspondence to inference over source families.

\section{Details for Section~\ref{sec:context-application}: four further tests}
\label{app:application-protocols}

This appendix follows the four scenarios in
Section~\ref{sec:context-application} and Figure~\ref{fig:applications}:
source mismatch, a time-varying source, an unseen context, and the
emergence of copying. Each subsection first specifies the scenario and the
informed Bayesian response, then gives the complete transformer comparison
and its numerical checks.

\subsection{Common experimental and numerical protocol}

The transformer's weights are fixed before evaluation. Informed Bayes
continues to use the family and prior represented in the transformer's
training sequences. During evaluation, neither predictor receives the test
parameters or a signal identifying the test condition. Test sets are
independent at the level of complete source draws or sequences. Paired
conditions share histories, source parameters and the random numbers that drive the source wherever
their test laws permit.

All losses and regrets are in bits. Means compare predictions on the same
histories, and uncertainty is computed across whole sequences, preserving all
tested times and paired conditions. Unless a caption states otherwise, curves
show pointwise $95\%$ intervals and textual $\pm$ values give one standard
error. Training-run variation is reported separately from sequence-to-sequence
variation.

Exact finite sums are used when available. Calculations with continuous or
latent variables are repeated with independent integration rules or chains,
and, except for explicitly marked preliminary estimates, test-set means are
reported only after every required reference passes the criteria in Appendix~\ref{app:bayes-evidence-details}. Saved forecasts are
checked in raw probability space; none is clipped or renormalized for
agreement.

\subsection{Source mismatch}

The fixed predictors receive sequences from a product test source whose $64$
states have random token labels, rather than the coordinate-revealing labels
used in training. This source has the same entropy and predictability as the
ordinary product source, but its labels hide the coordinate factorization.
Informed Bayes retains the training product prior and approaches the product
projection derived below. At
$(L_1,L_2)=(1,1)$ and $(4,4)$, its predicted stationary regrets are
$0.966\pm0.003$ and $2.237\pm0.007$ bits. At depth four and $t=1024$,
informed Bayes has regret $2.299$ bits. The two-layer transformer has
$2.309$ bits, and the two four-layer runs have $2.300$ and $2.305$ bits.

\subsubsection{Matched product test conditions}
\label{app:product-shifts}

We use run~1 of the known-encoding $8\times8$ product transformer (two
layers, width $128$, $544256$ parameters, $100000$ updates) and its informed
Bayesian predictor. Bayes assigns independent uniform depth priors on
$\{1,\ldots,8\}$ to the two coordinates, with each coordinate's depth shared
across its rows.

For the random-label condition we also evaluate both four-layer
known-encoding product models: width $256$, eight heads and $3454464$
parameters. All three transformers have $100000$ updates and see $6.4$
million sequences from the same training prior, with independently drawn
training sequences. The two four-layer runs use the same $1024$ test
histories per depth and the same Bayesian forecasts as the two-layer model.
We report the runs separately.

At each of $(L_1,L_2)=(1,1)$ and $(4,4)$, we generate $1024$ matched sets of
four test histories: a stationary product source, a product source observed
through random joint labels, a time-varying source with one new coordinate
matrix after the change point, and a time-varying source with two. The initial
state is uniform. The random-label condition uses a fresh uniform permutation
$\eta$ of the $64$ labels,
\[
 T^\eta(b\mid a)=T\bigl(\eta^{-1}(b)\mid\eta^{-1}(a)\bigr),
 \qquad T=T_1\otimes T_2,
\]
without shuffling token positions. Common initial states and uniform
random numbers pair all four conditions. Histories extend untruncated through the
transformer's limit of $1024$ tokens.

Regret is the KL divergence from the actual current transition row to the
forecast. We also measure the paired transformer-minus-Bayes regret and
$D_{\mathrm{KL},2}(Q_B\Vert Q_{\mathrm{TF}})$.

Pairing, the random label mapping and the parameter boundary are checked
against the full $64$-state matrices. Sixteen test histories spanning all
conditions and depths pass wider-integration and exact depth-one checks; the
largest depth-one probability discrepancy is below $10^{-13}$. All saved
forecasts pass raw normalization checks and agree on the shared prefixes.
The four-layer forecasts also pass separate-prefix causal checks. Their
regrets, predictive divergences and paired standard errors are recomputed
from the saved forecasts on all test histories.
\subsubsection{Product projection and residual information loss}
\label{app:mismatch-limits}

The product projection identifies the exact information excluded by the
visible factorization and therefore predicts the random-label mismatch cost.

Let $P_{uv,u'v'}>0$ be the true transition matrix on the visible
$8\times8$ states, with stationary law $\mu_{uv}$ and stationary
pair law $J_{uv,u'v'}=\mu_{uv}P_{uv,u'v'}$. The assumed model is
$Q_{A,B}(u',v'\mid u,v)=A_{uu'}B_{vv'}$. Maximizing its stationary
expected log likelihood separately over each stochastic row gives
\begin{align*}
 A^*_{uu'}&=\frac{\sum_{v,v'}J_{uv,u'v'}}{\sum_v\mu_{uv}}
             =P(U'=u'\mid U=u),\\
 B^*_{vv'}&=\frac{\sum_{u,u'}J_{uv,u'v'}}{\sum_u\mu_{uv}}
             =P(V'=v'\mid V=v).
\end{align*}
The coordinate processes need not themselves be Markov: these are
conditionals of stationary pairs. Positivity makes the full chain
ergodic and both row-wise maximizers unique.

For every history, the assumed likelihood factors into coordinate
transition-count likelihoods. Independent factor priors therefore give
independent posteriors even when the \emph{true} coordinates are
dependent. Empirical pair frequencies converge to the marginals of
$J$. Each factor prior gives positive mass to every neighborhood of
every positive transition matrix, since its depth-one component has
independent uniform Dirichlet rows. The limiting log likelihood has
the unique maximizer $(A^*,B^*)$, so posterior mass outside its
neighborhoods vanishes and the predictive matrix approaches
$Q^*=A^*\otimes B^*$. Averaging over depth shared across rows changes
the predictions at finite $t$, but not this limit.

\paragraph{Residual information loss.}
The stationary regret of this product projection is
\begin{align*}
 R_\infty
 &=\sum_{u,v,u',v'}J_{uv,u'v'}
           \log_2\frac{P_{uv,u'v'}}{A^*_{uu'}B^*_{vv'}}\\
 &=H(U'\mid U)+H(V'\mid V)-H(U',V'\mid U,V)\\
 &=I(U';V\mid U)+I(V';U\mid V)+I(U';V'\mid U,V).
\end{align*}
These are the predictive dependencies excluded by the assumed
factorization. They vanish with visible product labels. The random-label test
law has the same entropy rate $h=H(U',V'\mid U,V)$ but generally contains
predictive dependencies that visible-product Bayes excludes, making
$R_\infty>0$. Since uniform prediction is itself a product of two coordinate predictions,
$0\le R_\infty\le6-h$. This bounds the optimal stationary product
regret, not the regret at every finite $t$. The stationary average also
differs from the realized current-row regret, which varies with state.

For each saved random-label test condition, we calculate the stationary law
from its two coordinate matrices and saved label mapping. The finite sums
above then give the following means $\pm$ one standard error over the $1024$
test sources per depth, without fitting any prediction curve:
\[
 \begin{array}{c|ccc}
 (L_1,L_2)&h&\text{limiting expected loss}&R_\infty\\\hline
 (1,1)&4.9549\pm0.0039&5.9205\pm0.0010&0.9657\pm0.0035\\
 (4,4)&3.1080\pm0.0126&5.3449\pm0.0131&2.2369\pm0.0069
 \end{array}
\]
A direct $64$-state KL calculation agrees with the entropy formula
within $2.1\cdot10^{-15}$ bits; an independent stationary solve agrees
within $1.1\cdot10^{-15}$. These are unbounded-history informed-Bayesian
limits. The experiments start from a uniform state and stop at $1024$
tokens; their curves at finite $t$ approach the limits from that common
initial law.
\subsubsection{Complete product comparisons}

Table~\ref{tab:product-shift-endpoints} gives all eight two-layer endpoint comparisons,
and Figure~\ref{fig:product-shifts-depth1} gives the complete depth-one curves.
For this model, final predictive divergence is below $0.005$ bits in every
condition. On the random-label test sequences, its gap reaches
$0.320$ bits at depth one and $0.209$ bits at depth four, both after $32$
tokens, before the gap becomes small.

Both four-layer runs reduce this intermediate-context gap
(Table~\ref{tab:product-mismatch-larger}). Their largest gaps on the tested
grid occur after $16$ tokens: $0.129$ and $0.137$ bits at depth one, and
$0.087$ and $0.095$ bits at depth four. Predictive divergence
$D_{\mathrm{KL},2}(Q_B\Vert Q_{\mathrm{TF}})$ also decreases at $t=32$:
from $0.0795$ to $0.0117$ and $0.0115$ bits at depth one, and from
$0.0724$ to $0.0137$ and $0.0140$ bits at depth four.

\begin{table}[tbp]
\centering\footnotesize
\setlength{\tabcolsep}{5pt}
\begin{tabular}{ccrrr}
\toprule
$(L_1,L_2)$ & $t$ & Two layers & Four layers, run 1 & Four layers, run 2 \\
\midrule
$1,1$ & $32$ & $0.3201\pm0.0053$ & $0.0899\pm0.0021$ & $0.0870\pm0.0022$ \\
$1,1$ & $1024$ & $0.0004\pm0.0005$ & $-0.0051\pm0.0004$ & $-0.0004\pm0.0007$ \\
$4,4$ & $32$ & $0.2090\pm0.0084$ & $0.0617\pm0.0035$ & $0.0624\pm0.0036$ \\
$4,4$ & $1024$ & $0.0100\pm0.0016$ & $0.0003\pm0.0008$ & $0.0061\pm0.0015$ \\
\bottomrule
\end{tabular}
\caption{\textbf{Product mismatch with two- and four-layer transformers.}
Paired transformer-minus-Bayes regret, mean $\pm$ one standard error in
bits, on the same $1024$ random-label histories per depth. All models have
the same training prior and sequence budget. Negative differences are
possible because the test source lies outside the training family.}
\label{tab:product-mismatch-larger}
\end{table}

\begin{table}[tbp]
\centering\footnotesize
\setlength{\tabcolsep}{3.5pt}
\begin{tabular}{llrrr}
\toprule
$(L_1,L_2)$ & Condition & Transformer & Bayes & Paired difference \\
\midrule
$1,1$ & Visible product labels & $0.0759\pm 0.0011$ & $0.0723\pm 0.0010$ & $0.0036\pm 0.0003$ \\
$1,1$ & Random joint labels & $1.0396\pm 0.0095$ & $1.0392\pm 0.0095$ & $0.0004\pm 0.0005$ \\
$1,1$ & One new matrix after $t=256$ & $0.1561\pm 0.0026$ & $0.1530\pm 0.0026$ & $0.0031\pm 0.0003$ \\
$1,1$ & Two new matrices after $t=256$ & $0.2409\pm 0.0037$ & $0.2378\pm 0.0037$ & $0.0031\pm 0.0004$ \\
$4,4$ & Visible product labels & $0.0664\pm 0.0016$ & $0.0619\pm 0.0015$ & $0.0044\pm 0.0003$ \\
$4,4$ & Random joint labels & $2.3093\pm 0.0222$ & $2.2993\pm 0.0217$ & $0.0100\pm 0.0016$ \\
$4,4$ & One new matrix after $t=256$ & $0.2389\pm 0.0074$ & $0.2361\pm 0.0075$ & $0.0027\pm 0.0005$ \\
$4,4$ & Two new matrices after $t=256$ & $0.4164\pm 0.0106$ & $0.4171\pm 0.0107$ & $-0.0007\pm 0.0007$ \\
\bottomrule
\end{tabular}
\caption{\textbf{Regret after $1024$ tokens under four product test sources.}
The transformer has two layers. Mean $\pm$ one standard error in bits over
$1024$ paired test sets per depth.
The last column is the paired transformer--Bayes difference; informed Bayes
retains its training product prior in every condition.}
\label{tab:product-shift-endpoints}
\end{table}

\begin{figure}[tbp]
\centering
\includegraphics[width=\linewidth]{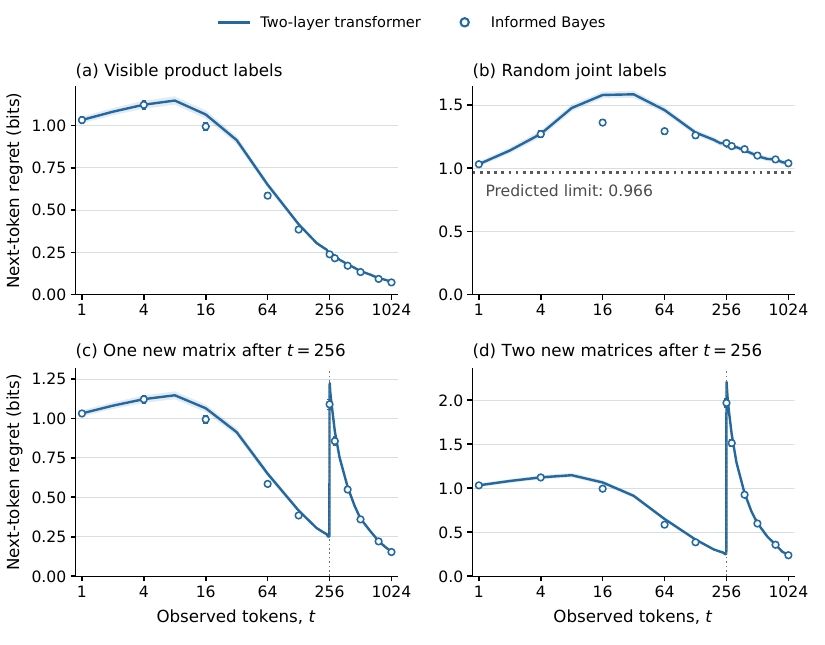}
\caption{\textbf{Complete product comparisons at depth one.}
Source depths are $L_1=L_2=1$. Solid curves show the two-layer transformer and
unconnected open circles show informed Bayes; all conditions average $1024$
paired draws, with pointwise $95\%$ intervals. The dotted random-label limit,
$0.966$ bits, is calculated from the test-source matrices.}
\label{fig:product-shifts-depth1}
\end{figure}

\subsection{Time-varying source}

Both fixed predictors receive sequences from a time-varying product source.
One condition uses a new transition matrix for one coordinate after $256$
tokens; the other uses new matrices for both coordinates. The state,
encoding and observed history continue across the boundary, so the first new
matrix generates $X_{257}$ from the observed $X_{256}$. Neither predictor is
reset or receives a change signal. The new matrices have the same fixed
depths, and the one- and two-matrix conditions use the same new first matrix.
Informed Bayes updates under its stationary product prior and predicts that
two independent changes produce twice the additional regret of one. The
transformer follows both the Bayesian adaptation curve and this factor-of-two
prediction.

\subsubsection{Additivity for a time-varying source}
\label{app:product-shift-prediction}

The factor of two in~\eqref{eq:product-shift-additivity} is an exact
population identity at every $t$ for additional regret in the paired
experiment of Appendix~\ref{app:product-shifts}.

\paragraph{Factorization and additivity.}
The independent coordinate priors imply, for every history,
\[
 Q_B(u',v'\mid u^t,v^t)
 =Q_B^{(1)}(u'\mid u^t)Q_B^{(2)}(v'\mid v^t).
\]
A common depth couples rows within each coordinate, but not the two
coordinates. Bayes continues to  use this stationary product model when the
test source varies with time. The actual post-boundary next-token law also
factors into current rows $p_1,p_2$. Therefore, for every sequence,
\[
 D_{\mathrm{KL},2}(p_1\otimes p_2\Vert q_1\otimes q_2)
 =D_{\mathrm{KL},2}(p_1\Vert q_1)+D_{\mathrm{KL},2}(p_2\Vert q_2).
\]
Let $r_u(t)$ be the expected regret of a coordinate that continues with its
first matrix and $r_c(t)$ that of a coordinate using a new matrix. The
coordinate alphabets, depth laws,
initial-state laws and predictor priors are identical, so
\[
 r_B^{\mathrm{stationary}}=2r_u,\qquad
 r_B^{\mathrm{one}}=r_c+r_u,\qquad
 r_B^{\mathrm{both}}=2r_c.
\]
Subtracting the stationary regret proves the doubling identity.
No equilibrium start or large-context approximation is required.
Different coordinate laws would still yield additive contributions,
but not generally equal contributions. A transformer's predictions
need not factor, so the same relation is a separate behavioral test.

\paragraph{Paired contrast and simultaneous uncertainty.}
For source draw $i$ and predictor $M$, define
\[
 Z_{M,i}(t)=\rho_{M,i}^{\mathrm{both}}(t)
       -2\rho_{M,i}^{\mathrm{one}}(t)
       +\rho_{M,i}^{\mathrm{stationary}}(t),
\]
where $\rho$ is conditional next-token regret. The paired construction makes
$Z_B$ the difference between the two coordinates' additional regrets. Hence
$\mathbb E Z_B=0$, although individual $Z_{B,i}$ need not vanish. We estimate
uncertainty from whole paired test sets, preserving correlations across
conditions and times.

A centered nonparametric bootstrap resamples the test sets of the two
depths independently, retaining both predictors and all $14$ post-change
times ($t=256,\ldots,1024$ on the tested grid) within each draw.
For each of $10000$ resamples we take the largest absolute mean
deviation divided by its reported paired standard error over all
$56$ depth/predictor/time combinations. The $95\%$ quantile is
$3.156$; the largest observed standardized discrepancy is $2.272$.
All contrasts lie within these simultaneous bands.
At depth four and $t=1024$, transformer additional regret with two new
matrices is $0.3500$ bits versus the predicted $0.3451$ from twice the
one-new-matrix condition; the paired contrast is $0.0049\pm0.0106$ bits (one
standard error).
Thus the transformer reproduces the additive mean relation derived from the
source factorization.

\begin{figure}[tbp]
\centering
\includegraphics[width=\linewidth]{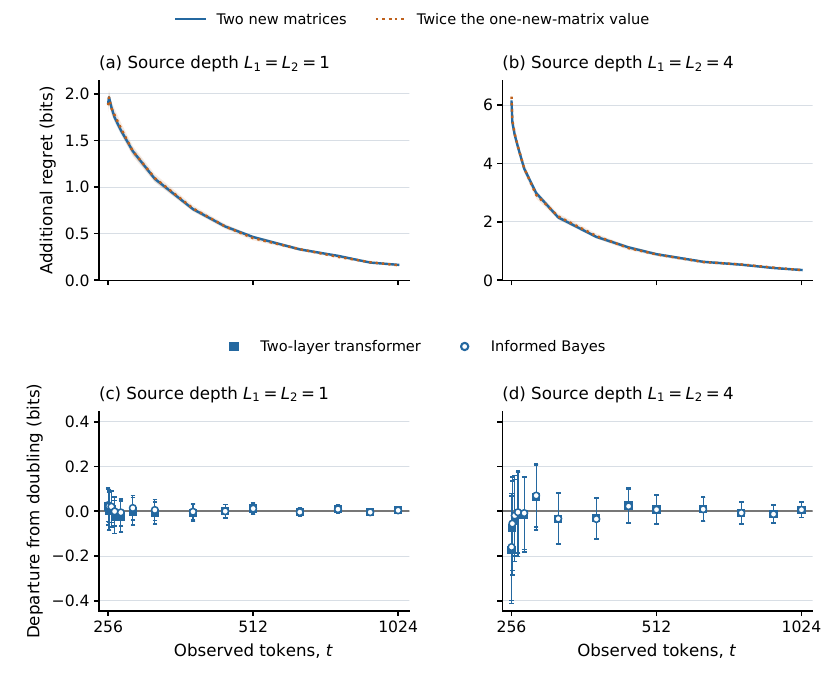}
\caption{\textbf{Doubling prediction and paired departures.}
Top: transformer additional regret with two new matrices (solid) and twice
the one-new-matrix value (dotted), at both depths. Bottom: $Z_M(t)$ for the
transformer (filled squares) and informed Bayes (open circles), without
connecting lines. Bars give approximate $95\%$ simultaneous intervals over
both depths, predictors and all post-change grid points. Every interval includes zero. Each depth averages $1024$
paired draws; the multiplier two is fixed by the Bayesian identity.}
\label{fig:product-shift-additivity}
\label{fig:product-shift-additivity-contrast}
\end{figure}
\subsubsection{Exact coin control}
\label{app:coin-shifts}

\paragraph{Models and scoring.}
We use the coin transformers trained with equal mass on
$\{0.2,0.8\}$ or a uniform bias on $[0,1]$ (two layers, width $64$,
$116736$ parameters, $20000$ updates). Three independent test sets contain
$4096$ sequences of length $257$ each: iid Bernoulli$(0.4)$, or a
time-varying source with bias $0.2$ for the first $\tau=64$ or $128$
tokens and bias $0.8$ thereafter. Both trained models receive the same
sequences, keep all tokens and receive no signal at the boundary. The
new bias first
affects $X_{\tau+1}$, so the prediction at $t=\tau$ uses only old
tokens from before the change to predict a token from after it. At every $t=1,\ldots,256$,
we score conditional expected log loss relative to the actual
next-token bias. Intervals are across independent test sequences;
there is one trained model per prior.

\paragraph{Exact count formulas.}
For equal prior weights on $a$ and $1-a$, $0<a<1/2$, a prefix with
$n_1$ ones in $n$ tokens has posterior odds
$[(1-a)/a]^{2n_1-n}$ in favor of $1-a$. Thus
\begin{equation}
 q_n(n_1)=a+\frac{1-2a}{1+[(1-a)/a]^{n-2n_1}};
 \qquad q_n^{\mathrm{uniform}}(n_1)=\frac{n_1+1}{n+2}.
 \label{eq:mismatch-coin-predictive}
\end{equation}
Write $b_{n,\theta}(n_1)=\binom n{n_1}\theta^{n_1}(1-\theta)^{n-n_1}$ and let
$D_{\mathrm{bin}}$ denote the binary KL divergence in bits. Under iid test
bias $\theta$, the exact
expected next-token regret is
\begin{equation}
 r_B(n;\theta)=\sum_{n_1=0}^{n}b_{n,\theta}(n_1)\,
              D_{\mathrm{bin}}\bigl(\theta\Vert q_n(n_1)\bigr).
 \label{eq:mismatch-coin-risk}
\end{equation}
Let $N_1(n)=\sum_{i=1}^nX_i$ denote the random number of ones.
For $\theta<1/2$, the law of large numbers gives $q_n(N_1(n))\to a$;
bounded convergence, since $q_n\in[a,1-a]$, gives
$r_B(n;\theta)\to D_{\mathrm{bin}}(\theta\Vert a)$. At $a=0.2,\theta=0.4$ the residual is
$0.151$ bits. The uniform prior instead includes the true bias and
its regret vanishes.

After $\tau$ tokens at bias $\theta_-$ and $m$ at $\theta_+$,
the two independent numbers of ones have laws $b_{\tau,\theta_-}$ and
$b_{m,\theta_+}$. Their convolution gives the exact regret for this time-varying test
source:
\begin{equation}
 r_{B,\tau}^{\mathrm{change}}(m)
 =\sum_{n_1=0}^{\tau+m}
       (b_{\tau,\theta_-}*b_{m,\theta_+})(n_1)\,
       D_{\mathrm{bin}}\bigl(\theta_+\Vert q_{\tau+m}(n_1)\bigr).
 \label{eq:coin-shift-risk}
\end{equation}
Before the switch, use~\eqref{eq:mismatch-coin-risk} with $\theta_-$.
These are population expectations with no fitted parameters or
Monte Carlo error. Enumerating all binary histories through eight
tokens, including a switch after three, verifies the count
formulas within $1.3\cdot10^{-15}$ bits.

\paragraph{The learned prior determines adaptation.}
The uniform-prior transformer follows the exact Bayesian regrets closely: the
maximum absolute paired mean regret difference across the three tested curves
is below $0.009$ bits. The two-point prior predicts persistent mismatch and
delayed adaptation, and the transformer reproduces both effects
(Figure~\ref{fig:coin-shifts}).
For iid bias $0.4$, its largest paired difference is $-0.0391$ bits
at $t=97$; for the $\tau=128$ change it reaches $-0.1741$ bits at
$t=228$. For $\tau=64$ the largest magnitude is $0.1425$ bits at
$t=150$. These are the largest values on the grid of tested $t$;
Table~\ref{tab:coin-shift-endpoints} reports all endpoints. On these two-bias
test sequences, the two-point transformer's smaller transient regret reflects
prediction outside the training prior.

\begin{figure}[tbp]
\centering
\includegraphics[width=\linewidth]{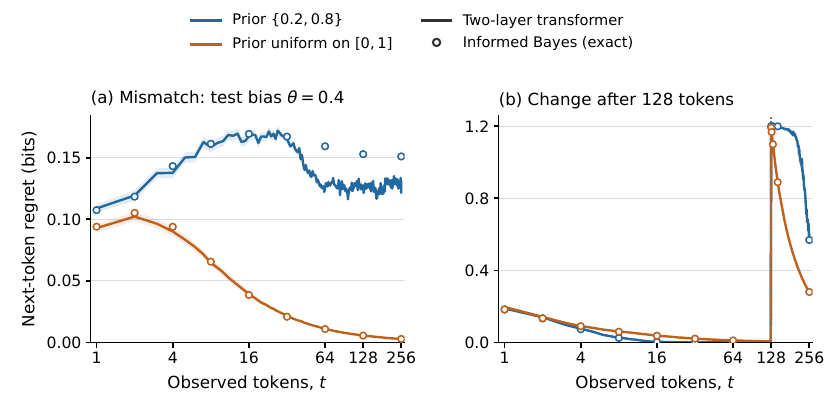}
\caption{\textbf{Exact coin predictions under mismatch and time variation.}
Regret under fixed bias $0.4$ (a) and under a time-varying source with bias
$0.2$ through $128$ tokens and $0.8$ thereafter (b). Solid curves are trained
transformers; open circles are exact informed-Bayesian expectations. Color
denotes the two-point (blue) or uniform (orange) training prior. Shading
gives pointwise $95\%$ intervals over $4096$ sequences; exact expectations
have no Monte Carlo error.}
\label{fig:coin-shifts}
\end{figure}

\begin{table}[tbp]
\centering\footnotesize
\setlength{\tabcolsep}{4pt}
\begin{tabular}{llrrr}
\toprule
Training prior & Test source & Transformer & Exact Bayes & Paired difference\\
\midrule
Two-point & $\theta=0.4$ & $0.13022\pm0.00040$ & $0.15109$ & $-0.02071\pm0.00040$\\
Uniform & $\theta=0.4$ & $0.00279\pm0.00006$ & $0.00279$ & $(9.51\pm2.32)\cdot10^{-6}$\\
Two-point & $0.2$ through $64$, then $0.8$ & $0.00059\pm0.00002$ & $3.75\cdot10^{-9}$ & $0.00059\pm0.00002$\\
Uniform & $0.2$ through $64$, then $0.8$ & $0.07964\pm0.00037$ & $0.08086$ & $-0.00170\pm0.00002$\\
Two-point & $0.2$ through $128$, then $0.8$ & $0.59820\pm0.00415$ & $0.56871$ & $0.03068\pm0.00581$\\
Uniform & $0.2$ through $128$, then $0.8$ & $0.27629\pm0.00066$ & $0.27985$ & $-0.00289\pm0.00003$\\
\bottomrule
\end{tabular}
\caption{\textbf{Coin regret after $256$ tokens, in bits.}
Transformer means and paired differences show one standard error
across $4096$ sequences. The Bayesian column is a population
expectation; the last column instead subtracts Bayesian regret on
each sampled prefix and therefore need not equal the difference
of the preceding columns.}
\label{tab:coin-shift-endpoints}
\end{table}

The saved float32 weights are evaluated with float64 log-softmax
and scoring. Checks reproduce sources from fixed seeds, verify raw
normalization and causal prefix predictions, and independently
recompute saved summaries. Bayesian sample means additionally pass
simultaneous Bernstein checks against the exact expectations over
all six curves and $256$ times. Predictions are neither clipped nor
renormalized to pass validation.

\subsection{An unseen context}
\label{app:application-transfer}

When the current token appears for the first time, no continuation from its
transition row has been observed. Shared-base Bayes nevertheless uses the
other rows to infer their common latent base; uniform-base Bayes instead
predicts uniformly. The transformers reproduce both responses. Across all
twelve shared-base depth--time conditions, both larger-model runs remain
within $0.015$ bits of informed Bayes. In the uniform-base control, the
transformer's loss above the exact uniform prediction is only
$0.00035$--$0.00052$ bits, compared with $0.071$--$0.411$ bits for token
counts.

\subsubsection{Shared-base transfer from earlier contexts}

Select histories satisfying $E_t=\{M_{X_t}(X^t)=0\}$: no continuation of the
current token has previously been observed. Equivalently, $X_t$ first appears
at time $t$. Selection depends only on the observed tokens, so
Equation~\eqref{eq:shared-base-predictive} reduces to
\begin{equation}
 Q_B(b\mid x^t,E_t)=\int Z_L(e_b;q)\,\pi(d\alpha,dL,d\sigma\mid x^t).
 \label{eq:application-unseen-row}
\end{equation}
Other rows and the initial token inform the shared posterior. At positive
depth, the posterior mean row $Z_L(e_b;q)$ is generally different from $q_b$;
with uniform base weights it is exactly $1/64$ at every depth.

With expected loss $h_t$ from~\eqref{eq:causal-expected-loss}, define
\begin{equation}
 \operatorname{sav}_M(t;L)=\mathbb E[6-h_t(Q_M)\mid E_t,\alpha=1,L],
 \qquad Q_{\rm KT}(b\mid x^t)=\frac{N_b(x^t)+1/2}{t+32}.
 \label{eq:application-transfer-gain}
\end{equation}
The saving over uniform measures useful prediction before a continuation has been
seen. The comparison uses $256$ independent shared-base sequences of length
$1025$ per depth $L=0,1,4,8$, with $\alpha=1$. Selection varies with time, and
all differences pair predictions on the same eligible histories.

\begin{table}[tbp]
\centering
\small
\begin{tabular}{rrrccc}
\toprule
$L$ & $t$ & $n$ & Larger, run 1 & Larger, run 2 & Token counts \\
\midrule
0 & 4 & 210 & $0.0018\pm0.0011$ & $0.0023\pm0.0012$ & $0.0067\pm0.0030$ \\
0 & 16 & 136 & $0.0106\pm0.0022$ & $0.0090\pm0.0022$ & $0.0173\pm0.0062$ \\
0 & 64 & 64 & $0.0143\pm0.0026$ & $0.0123\pm0.0018$ & $0.0552\pm0.0046$ \\
\addlinespace
1 & 4 & 214 & $0.0005\pm0.0016$ & $0.0026\pm0.0020$ & $-0.0038\pm0.0038$ \\
1 & 16 & 136 & $0.0027\pm0.0033$ & $-0.0030\pm0.0031$ & $-0.0098\pm0.0108$ \\
1 & 64 & 59 & $0.0057\pm0.0058$ & $0.0003\pm0.0058$ & $0.0497\pm0.0107$ \\
\addlinespace
4 & 4 & 223 & $0.0024\pm0.0021$ & $0.0041\pm0.0022$ & $-0.0059\pm0.0037$ \\
4 & 16 & 145 & $-0.0031\pm0.0042$ & $-0.0059\pm0.0040$ & $0.0137\pm0.0144$ \\
4 & 64 & 66 & $0.0139\pm0.0109$ & $0.0106\pm0.0098$ & $0.0317\pm0.0271$ \\
\addlinespace
8 & 4 & 228 & $0.0035\pm0.0028$ & $-0.0011\pm0.0029$ & $-0.0013\pm0.0051$ \\
8 & 16 & 161 & $0.0021\pm0.0041$ & $0.0076\pm0.0042$ & $0.0721\pm0.0169$ \\
8 & 64 & 57 & $0.0117\pm0.0110$ & $0.0067\pm0.0098$ & $0.2110\pm0.0487$ \\
\bottomrule
\end{tabular}
\caption{\textbf{Loss above informed Bayes after a new token.}
Shared-base source at $\alpha=1$: expected next-token loss difference in bits,
plus or minus one standard error across the $n$ selected sequences. Positive
values favor informed Bayes. Both larger-model runs use the same histories;
Bayesian numerical error is assessed separately.}
\label{tab:application-unseen-paired}
\end{table}

\subsubsection{Uniform-base control}

By symmetry and conditional row independence, an unobserved row has posterior
mean $U_{64}$, also conditional on any fixed generating depth. Hence
\begin{equation}
 \mathbb E[h_t(Q)-h_t(U_{64})\mid X^t,E_t]
 =-\frac1{64}\sum_b\log_2 Q(b\mid X^t)-6
 =D_{\mathrm{KL},2}(U_{64}\Vert Q).
 \label{eq:application-unseen-uniform-kl}
\end{equation}
The two-layer uniform-base transformer of
Section~\ref{sec:bayes-uniform-base} is evaluated on $1024$ sequences
whose depth is uniform on $0,\ldots,8$. At depths $0,1,4,8$, the eligible
counts at times $4/16/64$ are respectively $107/84/52$, $98/80/40$,
$110/81/37$ and $110/84/35$. Every eligible sequence enters. At
$L=1,t=64$, transformer and token-count losses above the uniform prediction are
$0.000365\pm0.000018$ and $0.3369\pm0.0063$ bits.

\subsubsection{Numerical validation}

Time~$4$ uses deterministic rank marginalization; times~$16$ and~$64$ use
independent $16$- and $32$-node calculations with the criteria in
Appendix~\ref{app:bayes-evidence-details}. All twelve selected test sets pass.
At time~$64$, the refinements retain all $66$ and $57$ eligible histories at
$L=4,8$. Their coarse/refined numerical error bounds are
$0.00213/0.00207$ and $0.00243/0.00217$ bits, below the $0.00333$-bit budget;
the corresponding integration-rule shifts are $0.00111$ and $0.00271$ bits.
A separate script recomputes and checks source identities, probabilities and paired
loss statistics.

\subsection{When copying starts to pay}

The final comparison asks when a fixed predictor begins to profit from
copying an earlier continuation of the current token. The shared-base
ordered-versus-shuffled experiment measures this for the transformer:
shuffling preserves token frequencies but destroys the order. Informed Bayes
and the frequency-only Bayesian predictor $B_{\mathrm{freq}}$ give the
corresponding Bayesian prediction. Their next-token gains through sixteen
tokens and at $t=32$ agree in size and depth ordering; the longer transformer curves show how
the gain from copying grows with the number of tokens.

\subsubsection{Behavioral emergence of copying}
\label{app:shared-shuffle-control}

The matched control isolates the context--successor associations used for
copying while preserving every sequence's token counts. The tables compare
both training runs on the same $256$ test sources per depth. The transformer
curves use $16384$ sources per depth: run~1 through $1024$ tokens in
Figure~\ref{fig:applications}(g,h), and both runs through $16$ tokens in
Figure~\ref{fig:shared-shuffle-early}. The Bayesian points through $t=64$
use the paired $256$-source test set. Both training runs
show the same depth ordering. At $t=1024$, run~1 gains
$0.569$ bits per prediction at $L=2$ and $0.880$ bits at $L=3$, with
cumulative gains of about $417$ and $704$ bits; run~2 gives $0.568$ and
$0.883$ bits per prediction and cumulative gains of about $418$ and $705$
bits. The $L=0$ control remains negative in both runs. The matched experiment
therefore measures when learning transitions becomes useful, without
identifying a particular internal circuit.

\paragraph{Matched training and evaluation.}
For each shared-base training sequence $x_1,\ldots,x_{1025}$, the control
keeps $x_1$ and independently permutes positions $2,\ldots,1025$. This
preserves the first token and full count vector exactly. Token frequencies
estimate the realized stationary law $\nu T=\nu$, rather than generally the
latent base $q$; conditional on $x_1$, the shuffled training law is invariant
to the order of the remaining tokens.

Each of two ordered/shuffled model pairs shares its initialization, matched
batches of $64$ source sequences, architecture, optimizer and learning-rate
schedule. Both receive $100000$ updates and $6.4$ million sequences. The
four-layer, width-$256$ models have $3454464$ parameters. Indexed batch
regeneration, independent replay and first-token/count preservation verify
the match. The paired tables use the same $256$ ordered sequences of length $1025$
per condition, with $\alpha=1$ and $L=0,2,3$; neither model receives these
test parameters. Let $h_t$ be the expected loss under the true row,
from~\eqref{eq:causal-expected-loss}. The difference between the
shuffled-trained and the ordered-trained model,
$h_t(\mathrm{TF}_{\mathrm{shuf}})-h_t(\mathrm{TF}_{\mathrm{ord}})$, defines
the transformer gain $g_{\mathrm{TF}}(t)$
of~\eqref{eq:shared-shuffle-payoff}, and its sum within a sequence defines
$G_{\mathrm{TF}}(n)$. Whole-sequence resampling retains all times and pairings.
For the $16384$-source evaluation, the endpoint gains at $L=2,3$ are
$0.5678\pm0.0025$ and $0.9087\pm0.0036$ bits per token, with cumulative
gains of $419.83\pm0.49$ and $700.40\pm0.69$ bits. The $L=0$ control gives
$-0.00410\pm0.00005$ bits per token. Figure~\ref{fig:applications}(g,h)
uses $1000$ paired whole-sequence bootstrap replicates for its pointwise
$95\%$ bands.

\begin{table}[tbp]
\centering\small
\begin{tabular}{ccrr}
\toprule
Depth & Run & Next-token gain at $1024$ & Cumulative gain through $1024$\\
\midrule
$0$ & 1 & $-0.00386\pm0.00034$ & $-9.40\pm0.41$\\
$0$ & 2 & $-0.00323\pm0.00032$ & $-6.59\pm0.33$\\
$2$ & 1 & $0.5690\pm0.0193$ & $416.8\pm3.8$\\
$2$ & 2 & $0.5685\pm0.0189$ & $417.6\pm3.8$\\
$3$ & 1 & $0.8795\pm0.0280$ & $703.5\pm5.3$\\
$3$ & 2 & $0.8827\pm0.0278$ & $704.5\pm5.4$\\
\bottomrule
\end{tabular}
\caption{\textbf{The gain from copying increases with source depth.}
Loss of the shuffled-trained model minus that of the ordered-trained model,
in bits; means $\pm$ paired standard errors across $256$ sequences per depth.
Positive values favor the model that can learn context--successor
associations.}
\label{tab:shared-shuffle-endpoints}
\end{table}

\paragraph{The first tokens.}
Figure~\ref{fig:shared-shuffle-early} shows how copying begins to pay.
At both depths, the mean next-token gain is positive from $t=2$ in both
training runs. Cumulative mean gains become positive within the first seven
observations, and the benefit grows more quickly at $L=3$.

\begin{figure}[tbp]
\centering
\includegraphics[width=\linewidth]{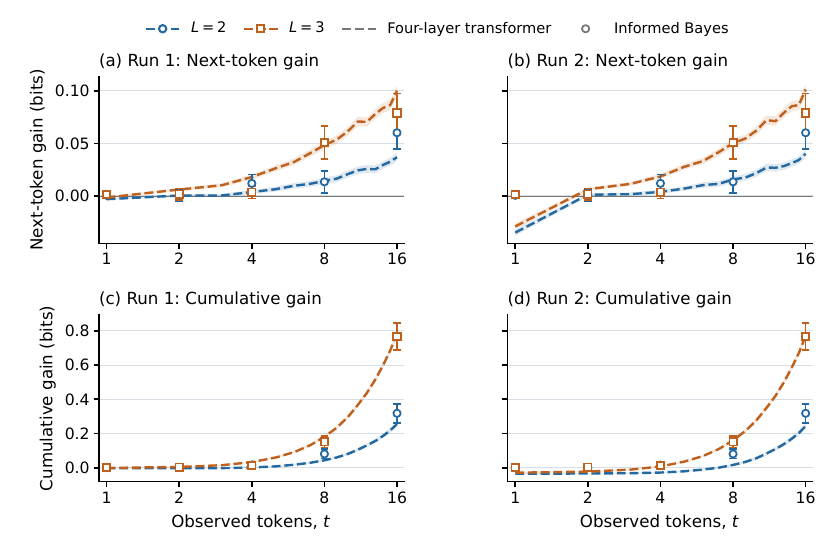}
\caption{\textbf{The early gain from copying increases with source depth.}
The first $16$ tokens at $L=2$ (blue) and $L=3$ (orange), with training
runs in separate columns. Rows show next-token and cumulative gains. Dashed
curves compare the four-layer ordered- and shuffled-trained transformers
on $16384$ sources per depth. Open circles and squares compare informed Bayes with the
frequency-only predictor on $256$ sources per depth. Shading gives pointwise
$95\%$ paired whole-sequence bootstrap intervals; bars give one standard
error for Bayes. Zero marks equal loss.}
\label{fig:shared-shuffle-early}
\end{figure}

\paragraph{The frequency-only Bayesian predictor.}
For comparison we need a Bayesian predictor that, like the shuffled-trained
transformer, can use token frequencies but not token order.
Draw $(q,T)$ from the training prior and let $\nu(T)$ be its
stationary law. The corresponding independent-token surrogate, retaining
the distinguished first-token law, is
\begin{equation}
 m_{\mathrm{freq}}(x^t)=\mathbb E_{q,T}\!\left[q_{x_1}
                  \prod_{s=2}^{t}\nu(T)_{x_s}\right],\qquad
 B_{\mathrm{freq}}(b\mid x^t)=\frac{m_{\mathrm{freq}}(x^t b)}{m_{\mathrm{freq}}(x^t)}.
 \label{eq:stationary-frequency-surrogate}
\end{equation}
This predictor integrates the source prior and receives none of the test
source's $q$, $T$ or $\nu$. It learns frequencies rather than transitions
and approximates the finite shuffled-training law.

At every $t=1,\ldots,16$, its mean expected loss is within $0.0043$ bits of
either shuffled transformer across $L=0,2,3$. After one token it gives a
repeat of that token probability $0.06790$, close to the $0.06491$ of
informed Bayes. The Bayesian order gain $h_t(B_{\mathrm{freq}})-h_t(B)$ and its cumulative sum
are negative throughout at $L=0$ and positive from $t=1$ at $L=2,3$. At
$t=16$, the per-token gains are $0.0601\pm0.0151$ and
$0.0788\pm0.0185$ bits, and the cumulative gains are
$0.3179\pm0.0563$ and $0.7667\pm0.0807$ bits.

\begin{table}[tbp]
\centering\small
\begin{tabular}{ccrrr}
\toprule
Depth & $t$ & Informed Bayes & Transformer run 1 & Transformer run 2\\
\midrule
$0$ & $16$ & $-0.0642\pm0.0077$ & $-0.0854\pm0.0089$ & $-0.0858\pm0.0088$\\
$2$ & $16$ & $0.0601\pm0.0151$ & $0.0537\pm0.0158$ & $0.0599\pm0.0150$\\
$3$ & $16$ & $0.0788\pm0.0185$ & $0.0839\pm0.0190$ & $0.0848\pm0.0182$\\
\midrule
$0$ & $32$ & $-0.0311\pm0.0045$ & $-0.0670\pm0.0066$ & $-0.0607\pm0.0065$\\
$2$ & $32$ & $0.0616\pm0.0163$ & $0.0545\pm0.0172$ & $0.0568\pm0.0165$\\
$3$ & $32$ & $0.1829\pm0.0265$ & $0.1842\pm0.0274$ & $0.1813\pm0.0269$\\
\bottomrule
\end{tabular}
\caption{\textbf{The Bayesian and transformer gains from token order.}
Next-token gain after $16$ and $32$ observations, in bits; means $\pm$ paired
standard errors on the same $256$ test sources per depth. Bayes is compared
with its stationary-frequency predictor, and each transformer with its
matched shuffled-trained model.}
\label{tab:shared-shuffle-bayes-sixteen}
\end{table}

Table~\ref{tab:shared-shuffle-bayes-sixteen} compares the predictors on the
same histories. The surrogate is specified entirely by the source prior,
independently of transformer scores. Its calculation averages alphabet
permutations analytically over $256$ independent source banks, with
$32$- and $16$-node exponent rules and equal weight on depths $0,\ldots,8$.
The informed Bayesian calculation uses $16$- and $32$-node exponent rules
on all $768$ histories at each $t=5,\ldots,16$ and at $t=32$; the first four tokens use
deterministic rank marginalization. Shared canonical references and shared
source banks are accounted for in the numerical uncertainty. All $96$
per-token and cumulative coordinates through $t=16$, and the three
next-token coordinates at $t=32$, pass the sampler,
integration-order and aggregate-precision checks. The largest numerical
allowance is $0.00311394$ bits; each coordinate must stay below both
$0.00333333$ bits and one quarter of its sequence standard error.

The preliminary $t=128$ estimates use the first $32$ test sequences per
depth and the same frequency-only predictor, evaluated with one integration
bank, $32$ exponent nodes, and one draw at each positive depth. The
depth-zero contribution and equal prior weights on $L=0,\ldots,8$ are retained.

\section{Summary of transformer--Bayes gaps}
\label{app:gap-table}

Table~\ref{tab:gap-summary} summarizes the gaps of Section~\ref{sec:bayes-evidence} across models, settings and training runs; the underlying comparisons are in Appendix~\ref{app:bayes-evidence-details}.

\begin{table}[tbp]
\caption{\textbf{Gap between transformer and informed Bayes, in bits per
token.} ``Largest gap'' ranges over the displayed settings; ``last gap''
uses their largest $t$. Pooled rows summarize $t\le4$. Two
values separated by a slash are the two training runs; ranges include all
listed settings and runs.}
\label{tab:gap-summary}
\centering\small
\setlength{\tabcolsep}{3.5pt}
\begin{tabular}{lccll}
\toprule
Source & Layers & Runs & Largest gap & Last gap\\
\midrule
Known encoding & 2 & 1 & $0.124$ ($t=32$) & $0.004$--$0.009$ ($t=1024$)\\
Known encoding & 4 & 2 & $0.044$ / $0.049$ ($t=16$) & $0.0012$--$0.0056$ ($t=1024$)\\
Hidden encoding & 2 & 2 & $0.13$ / $0.13$ ($t=256$) & $0.105$ / $0.096$ ($t=1024$)\\
Hidden encoding & 8 & 2 & $0.025$ / $0.022$ ($t=256$) & $0.0024$ / $0.0018$ ($t=1024$)\\
Uniform base & 2 & 1 & $0.087$ ($L=0$, $t=128$) & $0.024$--$0.068$ ($t=1024$)\\
Uniform base & 4 & 2 & $0.037$ / $0.041$ ($L=8$, $t=64$) & $0.00003$--$0.028$ ($t=1024$)\\
Shared base & 2 & 2 & $0.217$ / $0.220$ ($L=0$, $t=64$) & ---\\
Shared base & 8 & 2 & $0.0066$ / $0.0052$ ($t\le64$) & $-0.0010$--$0.0051$ ($t\ge256$)\\
Pooled families & 2 & 2 & $0.0197$ / $0.0190$ ($t=4$) & $0.0197$ / $0.0190$ ($t=4$)\\
Pooled families & 4 & 2 & $0.0080$ / $0.0067$ ($t=1$) & $0.0052$ / $0.0025$ ($t=4$)\\
\bottomrule
\end{tabular}
\end{table}

\end{document}